\documentclass{article}

    \usepackage{graphicx, tikz}
        \usetikzlibrary{arrows.meta, positioning}
        
    \usepackage{mathtools, amsthm, amsfonts, amsmath, amssymb, nicefrac}
    \usepackage{enumitem}
        \setlist[itemize,enumerate]{
          nosep,
          topsep=0pt,
          partopsep=0pt,
          parsep=0pt,
          itemsep=0pt,
          leftmargin=1.5em,
          labelsep=0.4em
        }
    \usepackage{fullpage}
    \usepackage[numbers,sort&compress]{natbib}
    \usepackage{hyperref}
    \usepackage{cleveref} %TC: Keep this last!!!

    \newtheorem{corollary}{Corollary}
    \newtheorem{definition}{Definition}
    \newtheorem{proposition}{Proposition}
    \newtheorem{example}{Example}
    \newtheorem{lemma}{Lemma}
    
    \newtheorem{theorem}{Theorem}

    \newcommand{\cV}{\mathcal{V}}
    
    \newcommand{\cX}{\mathcal{X}}
    \newcommand{\cY}{\mathcal{Y}}
    
    \newcommand{\E}{\mathbb{E}}

    \newcommand{\N}{\mathbb{N}}

    \newcommand{\R}{\mathbb{R}}
    
     \newcommand{\e}{\varepsilon}

    \newcommand{\brb}[1]{\bigl(#1\bigr)}
    \newcommand{\Brb}[1]{\Bigl(#1\Bigr)}
    \newcommand{\bbrb}[1]{\biggl(#1\biggr)}
    \newcommand{\Bsb}[1]{\Bigl[#1\Bigr]}
    \newcommand{\bbsb}[1]{\biggl[#1\biggr]}
    \newcommand{\lcb}[1]{\left\{#1\right\}}
    \newcommand{\bcb}[1]{\bigl\{#1\bigr\}}

    \newcommand{\bbfl}[1]{\biggl\lfloor#1\biggr\rfloor}
    \newcommand{\labs}[1]{\left\lvert#1\right\rvert}
    \newcommand{\babs}[1]{\bigl\lvert#1\bigr\rvert}

    \DeclareMathOperator*{\argmax}{argmax}

    \DeclareSymbolFont{extraup}{U}{zavm}{m}{n}
    \DeclareMathSymbol{\clubsuit}{\mathalpha}{extraup}{84}
    \DeclareMathSymbol{\spadesuit}{\mathalpha}{extraup}{81}
    \DeclareMathSymbol{\varheartsuit}{\mathalpha}{extraup}{86}
    \DeclareMathSymbol{\vardiamondsuit}{\mathalpha}{extraup}{87}

    \newcommand{\mc}[1]{\mathcal{#1}}

    \newcommand{\set}[1]{\left\{#1\right\}}
    
    \crefname{appendix}{Appendix}{Appendices}
    \Crefname{appendix}{Appendix}{Appendices}

    \title{On the Cost and Benefit of Chain of Thought:\\
    A Learning-Theoretic Perspective}

\author{
    Yue Zhang\textsuperscript{\rm (1)},
    Zhiyi Dong\textsuperscript{\rm (1)},
    Tommaso Cesari\textsuperscript{\rm (1)},
    Yongyi Mao\textsuperscript{\rm (1)}
    \\[0.5em]
    \textnormal{\textsuperscript{\rm (1)} University of Ottawa, Ottawa, Canada}\\
}
    
\begin{document}
    
    \maketitle
    \begin{abstract}
We develop a learning-theoretic framework for understanding Chain of Thought (CoT).
We model CoT as the interaction between an answer map and a chain rule that generates intermediate questions autoregressively, and define the reasoning risk of a hypothesis under this interaction.
Our first result is a tight canonical decomposition of this risk into two terms with opposing roles: an oracle-trajectory risk (OTR), which captures the benefit of CoT and reduces to a target-domain risk in a domain adaptation problem, and a trajectory-mismatch risk (TMR), which captures the cost of CoT through error accumulation along mismatched reasoning trajectories.
We then show that this cost is unavoidable without structure: if any one of the loss, the hypothesis answer map, or the chain rule lacks stability, the TMR can be arbitrarily large even when the OTR is zero and the hypothesis is uniformly close to the ground truth.
Conversely, under stability, we prove a tight upper bound on the TMR governed by an exact amplification factor that identifies bounded, linear, and exponential error-growth regimes.
Together, these results give a precise theory of when CoT helps, when it hurts, and what controls the transition between the two.
    \end{abstract}

    \section{Introduction}

The rise of large language models (LLMs) has reshaped the landscape of AI technologies.
Large LLMs, such as ChatGPT~\citep{openai_openai_2025}, 
LLaMA~\citep{touvron2023llama}, Claude~\citep{anthropic_claude_2026}, 
and Gemini~\citep{google_gemini-3-pro_2025}, trained on internet-scale data, have demonstrated striking capabilities.
Although these models are trained to predict the next word, one at a time in an autoregressive manner~\citep{vaswani_attention_2023, radford_language_2019, radford_improving_2018, brown_language_2020, zhang_sirens_2025}, they appear to have acquired remarkable abilities to answer user queries.
Most remarkably, there is growing evidence that these models can perform forms of ``reasoning'', although the precise notion of ``reasoning'' remains difficult to formalize.
Accordingly, a large body of work has focused on understanding or improving the reasoning capabilities of LLMs; see, e.g., \citep{gan_beyond_2026, chen_towards_2025, wang_multimodal_2025, li_implicit_2025, pan_survey_2025} and the references therein.

One important reasoning mechanism for LLMs is known as \emph{Chain of Thought}, or \emph{CoT}.
Specifically, it has been observed that when a query is too complex for the LLM to answer correctly, appending an additional instruction, such as ``solve the question step by step'', to the query prompt often allows the LLM to give the correct answer~\citep{wei_emergent_2022, wei_chain--thought_2023, kojima_large_2023}.
That is, encouraging the LLM to output a ``chain of thought'' may enable more powerful reasoning processes.
It is natural to suspect that the effectiveness of CoT is linked to the autoregressive nature of LLMs: generated tokens are fed back to the model as part of its input prompt, and this feedback, unfolding the ``chain of thought'', appears to facilitate reasoning.
At a high level, direct prompting asks the model to answer the original input, whereas CoT prompting first transforms the input into a reasoning-augmented prompt; our goal is to understand, from a learning-theoretic perspective, how and when this transformation improves or degrades performance.
Beyond this high-level intuition, however, relatively little is known theoretically about LLMs under CoT prompting.
Much of the existing theoretical work on CoT has focused on the expressivity of Transformer-based models\footnote{Transformers are the key building blocks of modern LLMs~\citep{vaswani_attention_2023}.} under CoT prompting~\citep{feng_towards_2023, li_chain_2024, merrill_expressive_2024, zhu_reasoning_2025}. 
The most prominent statistical learning-theoretic treatment of CoT to date is due to \citet{joshi_theory_2025}, who establish PAC-learning results for token-level hypothesis classes under 0-1 loss, reflecting the autoregressive structure of token generation.

Our interest in CoT is learning-theoretic. 
Our 
goal is to develop a learning theory that characterizes the structure of the LLM learning problem when the learned LLM is used for inference under CoT prompting.
Such a theory could help explain the benefit of CoT, the price that must be paid to obtain this benefit, and the guarantees and hardness phenomena associated with such learning tasks.

This work takes a step in that direction.
We begin with a general formulation of CoT via a notion we call a \emph{chain rule}, which captures the autoregressive structure of query-answer generation by LLMs.
In this formulation, the CoT answer to a question is treated as a sequence of new questions generated by the chain rule, together with their respective answers obtained from an answer map.
This formulation allows us to define a notion of recoverability for each question, which determines whether the answer to the question can be obtained by CoT prompting.
It also allows us to define the \emph{reasoning risk} of an LLM implementing a hypothesis answer map with respect to the ground-truth answer map.
Although this risk is defined for any test distribution, we are primarily interested in the case where the distribution is supported on what we call ``recoverable'' points\footnote{At a high level, these are prompts whose answers can be recovered by following the chain rule under the ground truth.}, so as to focus on the behavior of the tested LLM on questions whose answers can be recovered by CoT prompting under the ground truth.

In this setting, and under the mild assumption that the loss function defining the risk is a quasimetric, we observe that the reasoning risk decomposes into two terms, which we respectively designate the \emph{trajectory-mismatch risk} (TMR) and the \emph{oracle-trajectory risk} (OTR).
The TMR term captures the risk due to the CoT following the ``wrong trajectory'' because of answering the chain-rule-generated questions using a hypothesis answer map rather than the ground truth, whereas the OTR term captures the risk due solely to the final answer being given by the hypothesis answer map when the CoT trajectory has been correctly guided by the ground truth up to the final question.
These two terms are respectively associated with the \emph{cost} and \emph{benefit} of CoT.

First, the OTR term, which captures the benefit of CoT, can be viewed as the target-domain risk arising in a domain adaptation problem~\citep{ben-david_theory_2010}.
In particular, the chain rule interacting with the ground-truth answer map serves as a transformation of input questions, and the test distribution pushed forward by this transformation can be viewed as the target-domain distribution; the distribution from which examples are drawn to train the LLM is the source-domain distribution.
Thus, OTR minimization coincides with the objective of domain adaptation.
This perspective reveals how CoT prompting can be beneficial: when the push-forward of the test distribution is close to the distribution used for training, the OTR is guaranteed to be small.
In other words, CoT prompting can provide benefit by aligning the test distribution, via push-forward, with the training distribution; this allows the model to answer questions that are more complex than, and very different from, those seen during training.

On the other hand, the TMR term captures an unavoidable cost of CoT prompting: any inconsistency between a hypothesis answer map and the ground truth can be magnified along the mismatched CoT trajectory, contributing to an additional risk.
We characterize the behavior of this TMR term.
Specifically, via a no-free-lunch theorem, we show that without appropriate stability assumptions on the loss function, the hypothesis answer map, and the chain rule, the TMR can be arbitrarily large.
Under such stability assumptions, we then prove an upper bound on the TMR and show that this bound is optimal.
This bound identifies three cost regimes, in which the TMR remains bounded, grows linearly, or grows exponentially with the number of CoT steps.
This result helps explain error-accumulation phenomena observed in recent work on CoT~\citep{gan_rethinking_2025, bachmann_pitfalls_2025, zhu_advchain_2025}.

As a step toward a learning theory of CoT, this work provides a theoretical analysis of both the benefit and cost of CoT from a learning-theoretic perspective.
Our formulation differs from that of \citep{joshi_theory_2025}: while they study token-level hypothesis classes under 0-1 loss, we work over an abstract representation space, allow quasimetric losses, and model CoT through a higher-order autoregressive object, the chain rule.
These perspectives are complementary.

\paragraph{Our contributions.}
We defer additional related work to \Cref{s:related} and summarize our main contributions below.
\begin{itemize}
    \item We formalize CoT as a learning problem, define the reasoning risk, and show that it admits a tight decomposition into two terms: the trajectory-mismatch risk (TMR), which captures the cost of reasoning by measuring error accumulation along mismatched CoT trajectories, and the oracle-trajectory risk (OTR), which captures the benefit of reasoning by reducing to a target-domain risk in a domain adaptation problem.
    
    \item We investigate the necessity of stability for controlling the TMR. We prove a no-free-lunch theorem showing that without simultaneous stability of the loss function, the hypothesis answer map, and the chain rule, the TMR can be arbitrarily large, even when the OTR is zero and the hypothesis is uniformly close to the ground truth (\Cref{thm:main-no-free-lunch}).
    
    \item We prove an optimal upper bound on the TMR under stability (\Cref{thm:stability-TMR,thm:stability-reasoning-risk-tight}). 
    The bound features an \emph{exact} amplification factor describing when CoT errors remain bounded, accumulate linearly, or grow exponentially, and yields an upper bound on the reasoning risk that is tight in \emph{all} stability regimes (\Cref{cor:stability-reasoning-risk} and \Cref{thm:stability-reasoning-risk-tight}).
\end{itemize}
    
    \section{Formalizing Chain of Thought}
    \label{sec:formalizing-cot}

    In this paper, we view an LLM as a machine that maps an input object to an output object.
Concretely, the input may be a sequence of tokens, such as a user question or prompt, and the output may also be a sequence of tokens, such as an answer or response.
The terms ``question'' and ``answer'' should therefore be interpreted broadly: a question may refer to any input object submitted to the model, and an answer to the corresponding output object generated by the model.
This convention does not exclude contextual or multi-round interactions, since any relevant context, conversation history, system prompt, retrieved document, or memory state can be incorporated into the input object itself.
Thus, our formulation studies the map from a single input representation to a single output representation, without restricting what information the input representation may encode.

Modern LLMs typically operate in an autoregressive manner~\citep{openai_openai_2025, deepseek-ai_deepseek-r1_2025, google_gemini-3-pro_2025, abdullah_evolution_2025}: they generate one output token at a time, and each generated token is appended to the input sequence before the next token is generated.
Throughout the paper, we assume that this autoregressive process is deterministic, namely, that the next token at each time step depends functionally on the input sequence and on the output tokens generated so far.
More precisely, if $\cV$ denotes the token vocabulary and $\cV^*$ denotes the space of all finite strings over $\cV$, then an LLM with fixed parameters can be identified with a sequence of functions $F_1,F_2,\ldots$, collectively denoted by $F$, where each function $F_i\colon \cV^i\to\cV$ generates the next token from the current input string of length $i$.
Although this restriction abstracts away from the stochastic decoding used by practical LLMs, it allows us to focus on key aspects of reasoning in LLMs without introducing additional technicalities arising from stochasticity, an interesting topic in its own right.

The autoregressive process specified by $F$ does not run forever: once a special token indicating the END signal is generated, the process terminates.
Thus, one can equivalently identify $F$ with a function mapping $\cV^*$ to $\cV^*$, which, by abuse of notation, we also denote by $F$.
That is, for any input string $x\in\cV^*$, the LLM specified by $F$ generates an output string $F(x)\in\cV^*$.

To answer a user question, presented as a string $x$, the LLM $F$ may be used in two modes.
In the \emph{direct mode}, $x$ is passed to the LLM directly, and the LLM generates the answer string $F(x)$.
In the \emph{CoT mode}, an instruction prompt, for example ``solve the question step by step'', is appended to $x$.
The resulting string, denoted by $x^{\rm CoT}$, is then fed to the LLM, which generates $F(x^{\rm CoT})$ as the answer.
Empirical evidence~\citep{wei_chain--thought_2023, kojima_large_2023, sprague_cot_2025, stechly_chain_2025, wang_towards_2023, wang_self-consistency_2023} indicates that $F(x^{\rm CoT})$ is often more accurate than $F(x)$, suggesting that CoT prompting can elicit useful reasoning behavior, even though this behavior lacks a precise theoretical characterization.
Developing a sound theoretical understanding of LLMs operating in the CoT mode is the key motivation of this work.
To that end, define the function $F^{\rm CoT}\colon \cV^*\to\cV^*$ by $F^{\rm CoT}(x)=F(x^{\rm CoT})$.
The starting point of this work is a formalization of $F^{\rm CoT}$ in relation to $F$.

In our formulation, instead of treating the LLM $F$ as a function mapping $\cV^*$ to $\cV^*$, we consider it as a function mapping $\cX$ to $\cX$, for an appropriate representation space $\cX$.
We impose no structure on $\cX$: it may be the space $\cV^*$ of strings, a Euclidean space in which strings are embedded, or an abstract space of semantic objects.
At this level of abstraction, the LLM $F$ is represented as a function $f\colon\cX\to\cX$, and the corresponding function $F^{\rm CoT}$ associated with the CoT mode is formalized through a ``chain rule'', which we introduce next.
    
    \subsection{Learning setting and reasoning risk}

    Let $\cX$ be a set, and $K$ a positive integer.
    For any $k\in [K]$, let $D_K^{(k)} \colon \cX^{2k-1} \to \cX$;
    we say that the sequence $D_K \coloneqq \brb{ D_K^{(1)}, \dots, D_K^{(K)} }$ is a \emph{($K$-step) chain rule} and that $D_K^{(k)}$ is its $k$-th \emph{step}. 
    For any $f\colon \cX \to \cX$, which we call an \emph{answer map}, we define the sequence of functions
    $Q^{(1)}_{D_K,f}, A^{(1)}_{D_K,f}, \dots, Q^{(K)}_{D_K,f}, A^{(K)}_{D_K,f}\colon \cX \to \cX$
    inductively, as follows: 
    $Q^{(1)}_{D_K,f} \coloneqq D_K^{(1)}$, $A^{(1)}_{D_K,f} \coloneqq f \circ Q^{(1)}_{D_K,f}$,
    and for all $k \in \{2, \dots, K\}$ and $x\in \cX$,
    \[
        Q^{(k)}_{D_K,f}(x)
    \coloneqq
        D_K^{(k)} \Brb{
        x, \mspace{1mu}
        Q^{(1)}_{D_K,f} (x), \mspace{1mu}
        A^{(1)}_{D_K,f} (x), \mspace{1mu}
        \dots,
        Q^{(k-1)}_{D_K,f} (x), \mspace{1mu}
        A^{(k-1)}_{D_K,f} (x)
        },
    \ \ \
        A^{(k)}_{D_K,f}(x)
    \coloneqq
        f \brb{ Q^{(k)}_{D_K,f}(x) },
    \]
    and we say that $Q^{(k)}_{D_K,f}$ (resp., $A^{(k)}_{D_K,f}$) is the \emph{$k$-th question} (resp., \emph{answer}) and that the sequence 
    $\bigl( 
        x,$ 
    $    Q^{(1)}_{D_K,f} (x),$ 
    $    A^{(1)}_{D_K,f} (x),$ 
    $    \dots,$ 
    $    Q^{(K)}_{D_K,f} (x),$ 
    $    A^{(K)}_{D_K,f} (x) 
    \bigr)$ 
    is a \emph{($K$-step) Chain of Thought, or CoT (with initial prompt $x$ with respect to $(D_K, f)$)}.\footnote{%
    This recursive dependence captures the autoregressive nature of CoT: each new question may depend on the original prompt together with all previously generated questions and answers, just as later tokens in an LLM response are generated conditional on the earlier tokens.%
    }
    Furthermore, for any $f\colon \cX \to \cX$, we say that an element $x \in \cX$ is \emph{$(D_K, f)$-recoverable} if $f(x) = A^{(K)}_{D_K, f}(x)$.
    Let also $g\colon \cX \to \cX$ be a fixed unknown answer map, serving as, and hence referred to as, the \emph{ground truth}.
    We now provide an example to clarify these definitions.

    \begin{example}[Chain rule in arithmetic]
    \label{ex:multiply} 
    Let $\cX$ be the subset of all finite strings $x$ from the alphabet $\{ 0, 1, 2, 3, 4, 5, 6, 7, 8, 9, +, \cdot \}$ such that one of the following is true: a) $x$ is a nonempty string of digits, b) $x$ is the concatenation $x_1 x_2 x_3$ of three strings $x_1, x_2, x_3$, where $x_1$ and $x_3$ are nonempty strings of digits, and $x_2$ consists of a single ``$+$'' symbol, c) $x$ is the concatenation $x_1 x_2 x_3$ of three strings $x_1, x_2, x_3$, where $x_1$ and $x_3$ are nonempty strings of digits, and $x_2$ consists of a single ``$\cdot$'' symbol.
    In words, the elements of $\cX$ represent natural numbers as well as additions and multiplications of natural numbers.
    
    Fix the ``ground truth'' $g\colon \cX \to \cX$, i.e., the ideal evaluator that maps each $x\in \cX$ into the corresponding result of the addition/multiplication (if $x$ represents an addition/multiplication) or into $x$ (if $x$ represents a number). 
    This definition of $g$ is used only to specify correctness; it should not be interpreted as saying that the learned hypothesis is trained on all of $\cX$. 
    Indeed, one may take the source distribution to be supported only on elementary expressions, such as single-digit-by-single-digit multiplications, multiplications by $10$, and additions, while the test distribution is supported on more complex multiplication expressions. 
    Consider now a chain rule decomposing the multiplication of a single-digit integer and a double-digit integer in terms of such elementary operations.
    We illustrate the behavior of such a chain rule on the initial prompt\footnote{Here and below, expressions such as $a_1\cdot a_2$ and $a_3+a_4$, where $a_1, \dots, a_4 \in \N$, denote with a slight abuse of notation the corresponding strings in $\cX$.} $x \coloneqq 7 \cdot 26$ (see Appendix~\ref{app:details-example-multiply} for details and explicit definitions):
    \begin{align*}
        Q^{(1)}_{D_4,g} (7 \cdot 26) 
    & 
        = 7 \cdot 2, 
    &
        A^{(1)}_{D_4,g} (7 \cdot 26) 
    &
        = 14, 
    &
        Q^{(2)}_{D_4,g} (7 \cdot 26) 
    &
        = 10 \cdot 14, 
    &
        A^{(2)}_{D_4,g} (7 \cdot 26) 
    &
        = 140,
    \\
        Q^{(3)}_{D_4,g} (7 \cdot 26) 
    & 
        = 7 \cdot 6, 
    &
        A^{(3)}_{D_4,g} (7 \cdot 26) 
    &
        = 42, 
    &
        Q^{(4)}_{D_4,g} (7 \cdot 26) 
    &
        = 140 + 42, 
    &
        A^{(4)}_{D_4,g} (7 \cdot 26) 
    &
        = 182.
    \end{align*}
    More generally, as detailed in Appendix~\ref{app:details-example-multiply}, every one-digit-by-two-digit multiplication expression of the form $d_1\cdot(10d_2+d_3)$ is $(D_4,g)$-recoverable for this chain rule.
    \end{example}
    Two clear advantages of CoT emerge immediately from \Cref{ex:multiply}. 
    First, CoT can enable more \emph{efficient storage}. 
    Without CoT, a hypothesis that answers such test queries directly may need to learn the one-digit-by-two-digit multiplication map. 
    With CoT, it suffices for the hypothesis to be accurate on the elementary subquestions generated by the chain rule: one-digit-by-one-digit multiplications, multiplications by $10$, and additions.
    Of course, in this example, both approaches require negligible memory, but the memory required to multiply larger numbers increases quadratically with the number of digits without CoT, whereas it remains essentially constant with CoT.
    Second, and more importantly, CoT helps a trained model generalize \emph{beyond the training distribution}. Consider, for example, that during training all examples are single-digit-by-single-digit multiplications, multiplications by $10$, and additions. 
    Even if the model never sees one-digit-by-two-digit multiplications, with CoT, it is still possible to solve problems such as $4 \cdot 23$. Abundant empirical evidence also shows that without CoT, language models are generally unable to solve multi-digit multiplication tasks~\citep{nye_show_2021, gambardella_language_2024, bai_why_2025, yang_chain--thought_2025}.
    
    Since out-of-distribution learning appears to be achievable with CoT, it is natural to define a notion of risk with respect to an arbitrary distribution, even if, like in the example, its support is fully disjoint from that of the training distribution.
    For any \emph{testing distribution} $\nu$ on $\cX$, any \emph{loss function} $\ell\colon \cX \times\cX \to [0,\infty)$, and any \emph{hypothesis answer map} (or simply \emph{hypothesis}) $f\colon \cX \to \cX$, we define the \emph{reasoning risk of $f$ (with respect to $D_K, g, \nu, \ell$)} as\footnote{For expectations to be well-defined, we always implicitly assume that $\cX$ is a measurable space, and losses $\ell\colon \cX\times\cX \to [0,\infty)$, answer maps $f,g\colon \cX \to \cX$, and steps $D_K^{(1)}, \dots, D_K^{(K)}$ of chain rules are measurable functions.}
    \[
        R^{D_K}_{\nu,\ell} (f, g)
    \coloneqq
        \underset{X \sim \nu}{\E}
            \bbsb{
                \ell\Brb{
                    A^{(K)}_{D_K,f} (X) , \
                    g(X)
                }
            }.
    \]
    In words, the reasoning risk quantifies the average loss incurred by answering with the hypothesis $f$ after $K$ reasoning steps, rather than answering with the ground truth $g$.

    For any $q\colon \cX\to\cX$ and any distribution $\nu$ on $\cX$, we denote by $q\#\nu$ the push-forward of $\nu$ through $q$.

    \subsection{Risk decomposition}
    
    For the remainder of this work, fix a loss function $\ell\colon \cX \times \cX \to [0,\infty)$ which we assume to be a \emph{quasimetric},\footnote{That is, $\ell$ satisfies the triangle inequality and for all $x\in\cX$, $\ell(x,x)=0$.} a mild regularity assumption that is satisfied by many commonly-used losses (e.g., $0$-$1$, absolute, Hamming, total variation, pinball/quantile losses, \emph{etc}.).  
    Fix also a testing distribution $\nu$ on $\cX$ whose support we assume to consist of $(D_K,g)$-recoverable points, and note that if this minimal structural assumption is dropped, the reasoning risk could be made arbitrarily large.\footnote{See Appendix \ref{sec:large-risks-no-structure} for a more detailed discussion.}
    
    Under these two assumptions on $\ell$ and $\nu$, we establish the following tight canonical decomposition inequality for the reasoning risk, in terms of what we call the \emph{trajectory-mismatch risk} (TMR) and the \emph{oracle-trajectory risk} (OTR):
    for any hypothesis $f\colon\cX\to\cX$,
    \begin{equation}
    \label{eq:canonical-decomposition}
        R_{\nu, \ell}^{D_K}(f,g)
    \le
        \underbrace{
            \underset{X \sim \nu}{\E}
            \bbsb{
                \ell\Brb{
                    f\brb{ Q_{D_K,f}^{(K)}(X) }, \,
                    f\brb{ Q_{D_K,g}^{(K)}(X) }
                }
            }
        }_{\text{trajectory-mismatch risk (TMR)}}
        +
        \underbrace{
            \underset{X \sim \nu}{\E}
            \bbsb{
                \ell\Brb{
                    f \brb{ Q_{D_K,g}^{(K)}(X) }, \, 
                    g \brb{ Q_{D_K,g}^{(K)}(X) }
                }
            }
            }_{\text{oracle-trajectory risk (OTR)}}.
    \end{equation}
    This tight (see \Cref{thm:main-no-free-lunch}) inequality follows directly from the definition of reasoning risk, the triangle inequality of $\ell$, and the almost-sure $(D_K,g)$-recoverability of $X$.
The TMR term measures the discrepancy, under the hypothesis $f$, between the answers to the final questions along the two CoT trajectories generated by $f$ and by the ground truth $g$.
The OTR term, on the other hand, measures the expected loss incurred by using $f$ to answer the final question along the CoT trajectory correctly generated by the ground truth $g$.
Thus, TMR and OTR capture the cost and benefit of CoT, respectively.
We elaborate on these two terms in the next sections.

\section{Benefit of CoT}
\label{sec:benefit-cot}

Recall that the classical notion of statistical risk of 
a hypothesis $f$ with respect to the ground truth $g$ under a distribution $\mu$ on $\cX$ and measured by the loss $\ell$ is defined by
\[
    R_{\mu,\ell}(f,g) 
\coloneqq 
    \underset{X\sim\mu}{\E} \Bsb{ \ell \brb{ f(X), \, g(X) } }.
\]
The OTR term is exactly
$R_{Q_{D_K,g}^{(K)}\#\nu,\ell}(f,g)$, namely the statistical risk of $f$ with respect to $g$ under the push-forward of $\nu$ by the $K$-th question $Q_{D_K,g}^{(K)}$.
Since the LLM is trained on a distribution, say $\mu$, that may differ from the testing distribution $\nu$, minimizing the OTR coincides with the objective of a domain adaptation problem~\citep{ben-david_analysis_2006}: $\mu$ is the source-domain distribution, the push-forward distribution $Q_{D_K,g}^{(K)}\#\nu$ is the target-domain distribution, and one aims to learn a hypothesis, often from a labeled source sample and an unlabeled target sample, that minimizes the target-domain risk.

This observation has a useful implication.
When the push-forward distribution $Q_{D_K,g}^{(K)}\#\nu$ coincides with the training distribution $\mu$, minimizing the OTR is exactly the usual supervised-learning objective under the training distribution.
On the other hand, when $Q_{D_K,g}^{(K)}\#\nu$ differs from $\mu$, a rich body of results in the domain adaptation literature can be used to study guarantees and hardness for minimizing OTR.
For example, existing domain-adaptation bounds imply inequalities of the form
\[
{\rm OTR} \le  R_{\mu,\ell}(f,g) + {\rm Disc} \brb{ \mu, Q_{D_K,g}^{(K)}\#\nu } + R^*,
\]
where ${\rm Disc}(\cdot, \cdot)$ measures some notion of discrepancy between two distributions and $R^*$ is a term independent of $f$.
The terms $R_{\mu,\ell}(f,g)$ and ${\rm Disc} \brb{ \mu, Q_{D_K,g}^{(K)}\#\nu }$ can often be upper-bounded using empirical estimates from data, together with concentration inequalities involving suitable complexity measures.
As illustrated in \Cref{sec:OTR_bounds}, the main implication is that when CoT prompting can push forward the testing distribution in a way that closely aligns it with the training distribution, so that ${\rm Disc} \brb{ \mu, Q_{D_K,g}^{(K)}\#\nu }$ is small, one obtains a guarantee of low OTR.

We dedicate the remainder of this paper to understanding the behavior of the TMR term, which captures the cost of CoT.

    \section{Cost of CoT}

We now turn to the TMR term, which captures the cost of CoT.
We first show that, without appropriate stability assumptions, the TMR cannot be controlled even when the OTR is zero and the hypothesis is uniformly close to the ground truth.
We then prove a tight upper bound under stability.

    \subsection{No free lunch without stability}
    \label{sec:freeLunch}
    
    In this section, we investigate the necessity of stability, as captured by the following definition.
    
    \begin{definition}
    Let $n\in\N$, $(\cX,\rho_\cX)$ and $(\cY,\rho_\cY)$ be metric spaces,\footnote{%
    When $\cY=[0,\infty)$, we always implicitly equip $\cY$ with the usual Euclidean metric.
    }
    and $\gamma,\gamma_1, \dots, \gamma_n \ge 0$.
    A function $h \colon  \cX^n \to \cY$ is \emph{$\gamma$-stable (with coordinate-wise Lipschitz constants $\gamma_1, \dots, \gamma_n$)} if $\sum_{i=1}^n \gamma_i \le \gamma$ and, for all $x_i,x_i' \in \cX$,
    $
        \rho_\cY \brb{
            h(x_1,\ldots,x_n),
            h(x_1',\ldots,x_n')
        }
        \le
        \sum_{i=1}^n \gamma_i \rho_\cX (x_i,x_i')
    $.
    \end{definition}

    We now present our no-free-lunch theorem, which shows that unless stability holds simultaneously
for $f$, $\ell$, and each step of $D_K$, the TMR can be arbitrarily large, even if the OTR is $0$, the hypothesis
$f$ is arbitrarily close to the ground truth $g$, and two out of the three stability assumptions hold in a
strong sense (with arbitrarily small Lipschitz constants).
The construction works even under a uniform boundedness restriction on the loss: for any prescribed finite bound, the TMR can be made to attain the largest loss value allowed by that restriction.
Note also that the theorem shows, in particular, that the TMR + OTR decomposition in \eqref{eq:canonical-decomposition} is tight.

    \begin{theorem}
    \label{thm:main-no-free-lunch}
    For any integer $K \ge 2$, any $M > 0$, any $\e > 0$, and any assumption 
    $\mathsf A \in \{1,2,3\}$ (see below),
    there exist a metric space $(\cX,\rho)$, a quasimetric loss function
    $\ell \colon \cX \times \cX \to [0,M]$,
    two answer maps $f,g \colon \cX \to \cX$ such that
    $\sup_{x \in \cX} \rho \brb{f(x),g(x)} \le \e$, a $K$-step chain rule $D_K$, and a
    distribution $\nu$ on $\cX$ supported on $(D_K,g)$-recoverable points such that:
    \begin{itemize}
        \item if $\mathsf A=1$, then $\ell$ and all steps of $D_K$ are $\e$-stable;
        \item if $\mathsf A=2$, then $f$ and all steps of $D_K$ are $\e$-stable;
        \item if $\mathsf A=3$, then $f$ and $\ell$ are $\e$-stable;
    \end{itemize}
    and such that \eqref{eq:canonical-decomposition} holds with equality and
    \[
        \underset{X \sim \nu}{\mathbb{E}}
        \bbsb{
            \ell\Brb{
                f\brb{ Q_{D_K,f}^{(K)}(X) }, \,
                f\brb{ Q_{D_K,g}^{(K)}(X) }
            }
        }
        = M,
        \ \
        \underset{X \sim \nu}{\mathbb{E}}
        \bbsb{
            \ell\Brb{
                f \brb{ Q_{D_K,g}^{(K)}(X) }, \,
                g \brb{ Q_{D_K,g}^{(K)}(X) }
            }
        }
        = 0.
    \]
    \end{theorem}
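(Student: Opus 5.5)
The plan is to give three explicit constructions, one for each $\mathsf A$. Each uses the same skeleton: take $\nu\coloneqq\delta_{x_0}$ a point mass, and write $q^*\coloneqq Q^{(K)}_{D_K,g}(x_0)$ and $q_f\coloneqq Q^{(K)}_{D_K,f}(x_0)$. The chain rule will satisfy $D_K^{(k)}=D_K^{(2)}$ for all $k\ge 2$, and each $D_K^{(k)}$ will depend only on $x$ and $a_1$. Hence $Q^{(k)}=Q^{(2)}$ for $k\ge2$, and the argument is the same for every $K\ge 2$.

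\emph{Reduction.} In every construction I will arrange three properties. First, $g(q^*)=g(x_0)$, which is exactly $(D_K,g)$-recoverability of $x_0$. Second, $f(q^*)=g(q^*)$, so the OTR is $0$. Third, $\ell(f(q_f),f(q^*))=M$. Given these, $R^{D_K}_{\nu,\ell}(f,g)=\ell(f(q_f),g(x_0))=\ell(f(q_f),f(q^*))=\mathrm{TMR}+\mathrm{OTR}$. So \eqref{eq:canonical-decomposition} holds with equality automatically, and the whole task reduces to achieving the three properties under the prescribed stability. Without loss of generality I take $\e\le1$.

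\emph{$\mathsf A=1$ ($f$ unstable).} Take $\cX=\R$ with $\rho=|\cdot|$. Take $\ell(a,b)=\min(\e|a-b|/2,M)$, which is a truncated metric, hence a quasimetric, and is $\e$-stable with coordinate constants $\e/2,\e/2$. Set $x_0=1$, $D_K^{(1)}(x)=\e x$, and $D_K^{(k)}(x,q_1,a_1,\dots)=\e a_1$ for $k\ge2$; all steps are $\e$-stable. Let $L\coloneqq 2M/\e$ and define $g(0)=g(\e)=g(1)=0$ and $g(y)=L$ otherwise. Let $f=g$ except $f(\e)=\e$, so $\sup\rho(f,g)=\e$.

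Along the $g$-trajectory, $Q^{(1)}=\e$ and $A^{(1)}=0$, so $q^*=0$. Then $g(q^*)=0=g(x_0)$ and $f(q^*)=0$. Along the $f$-trajectory, $A^{(1)}=\e$, so $q_f=\e^2$, which differs from $0,\e,1$ when $\e<1$; if $\e=1$, shrink $\e$ first. Therefore $f(q_f)=L$ and $\ell(L,0)=M$.

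\emph{$\mathsf A=2$ ($\ell$ unstable).} Take $\cX=[0,1]$ and $\ell(a,b)=M\cdot\mathbf 1\{a\neq b\}$. Let $g\equiv0$ and $f(x)=\e x$; then $f$ is $\e$-stable and $\sup\rho(f,g)\le\e$. Set $x_0=1$, $D_K^{(1)}(x)=\e x$, and $D_K^{(k)}=\e a_1$ for $k\ge2$, all $\e$-stable. Along $g$, $q^*=0$, $g(q^*)=0=g(x_0)$, and $f(q^*)=0$. Along $f$, $q_f=\e^3$ and $f(q_f)=\e^4\neq0$, so the loss equals $M$.

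\emph{$\mathsf A=3$ ($D_K$ unstable).} Take $\cX=\R$, $\ell(a,b)=\min(\e|a-b|/2,M)$, and $f(x)=\e x$. Define $g(x)=\e x$ for $x\neq0$ and $g(0)=\e$, so $\sup\rho(f,g)=\e$. Set $x_0=0$ and $D_K^{(1)}(x)=x$. For $k\ge2$, let $D_K^{(k)}=1$ if $a_1=\e$ and $D_K^{(k)}=T$ otherwise, where $T$ is large; these steps are discontinuous, hence unstable. Along $g$, $A^{(1)}=g(0)=\e$, so $q^*=1$, with $g(1)=\e=g(x_0)$ and $f(1)=\e$. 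Along $f$, $A^{(1)}=0$, so $q_f=T$ and $\ell(\e T,\e)=\min(\e^2|T-1|/2,M)=M$ once $T\ge 1+2M/\e^2$.

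\emph{Main obstacle.} The delicate point is making the two trajectories diverge while keeping recoverability and zero OTR at the same time. Stability of $\ell$ forces the divergence to appear as a large $\rho$-gap between $f(q_f)$ and $f(q^*)$. That gap must then be created entirely by whichever of $f$ or $D_K$ is allowed to be unstable, using only the $\e$-sized discrepancy between $f$ and $g$ at the first answer. The remaining checks are routine: the Lipschitz constants, the quasimetric property of the truncated losses, and the fact that the constructed points do not collide.
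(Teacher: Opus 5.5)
Your proposal is correct and follows essentially the same route as the paper: three Dirac-mass counterexamples, one per dropped assumption. In each, the oracle trajectory lands where $f=g$ (zero OTR), while the learner trajectory is diverted by a spike in $f$, by a discrete $0$--$M$ loss, or by a discontinuous branching step, and your three-property reduction gives equality in \eqref{eq:canonical-decomposition} just as the paper's direct computations do. The only substantive difference is in the case $\mathsf A=3$: your $g$ is discontinuous, whereas the paper's $g$ is also $\e$-stable, a slightly stronger construction that the statement does not require.
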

    
    \begin{proof}[Proof sketch.]
    The claim follows by applying \Cref{lem:nfl-no-model-lipschitz,lem:nfl-no-loss-lipschitz,lem:nfl-no-chain-lipschitz} in Appendix \ref{appe:proof-of-no-free-lunch}, according to whether the omitted condition is the stability of $f$, $\ell$, or $D_K$, respectively.
    At a high level, all three constructions make the oracle-trajectory risk vanish by ensuring that the oracle's trajectory reaches a point where $f$ and $g$ agree, while the learner's trajectory reaches a point whose hypothesis answer incurs loss $M$ relative to the oracle-trajectory answer.
    When the stability of $f$ is dropped, a non-Lipschitz spike in $f$ turns a small, stable trajectory separation into a large output separation.
    When the stability of $\ell$ is dropped, a non-Lipschitz loss assigns loss $M$ to an arbitrarily small nonzero output discrepancy.
    When the stability of $D_K$ is dropped, a discontinuous step of the chain rule amplifies a tiny intermediate-answer discrepancy into a large trajectory mismatch.
    Thus, in each case, the TMR is $M$, the OTR is zero, and \eqref{eq:canonical-decomposition} holds with equality.
    \end{proof}

    This no-free-lunch theorem shows that the theoretical obstruction is not the OTR term or the \emph{accuracy} of the hypothesis $f$, but rather the \emph{instability} of $f$, the loss $\ell$, or any step of the chain rule $D_K$, which precludes bounding the TMR term.
    In the following section, we investigate what changes when, instead, these stability conditions hold.

    \subsection{A tight bound on the Trajectory-Mismatch Risk under stability}
    \label{sec:strong-stability}

    In this section, we prove a tight bound on the Trajectory-Mismatch Risk (TMR) under the assumption that the loss $\ell$ is $\lambda$-stable, the hypothesis $f$ is $\phi$-stable, and each step of the chain rule $D_K$ is $\delta$-stable.
    The key quantity in the bound is the following amplification factor.
    
    \begin{definition}
    \label{def:amplification-factor}
    For any integer $K\ge 2$, any $\phi\ge 0$, and any $\delta\ge 0$, define the \emph{amplification factor}
    \[
        \alpha_K(\phi,\delta)
    \coloneqq
        \begin{cases}
        \displaystyle
        \frac{1-(\phi\delta)^{K-1}}{1-\phi\delta},
        & \text{if } (\phi\ge 1 \text{ or } \delta\le 1),\ \text{and } \phi\delta\neq 1,
        \\
        \displaystyle
        K-1,
        & \text{if } (\phi\ge 1 \text{ or } \delta\le 1),\ \text{and } \phi\delta=1,
        \\
        \displaystyle
        \delta^{K-2-m_K(\phi,\delta)}
        \frac{1-(\phi\delta)^{m_K(\phi,\delta)+1}}{1-\phi\delta},
        & \text{if } \phi<1,\ \delta>1,\ \text{and } \phi\delta\neq 1,
        \\
        \displaystyle
        \delta^{K-2-n_K(\phi,\delta)} \brb{ n_K(\phi,\delta)+1 },
        & \text{if } \phi<1,\ \delta>1,\ \text{and } \phi\delta=1,
        \end{cases}
    \]
    where
    \[
        m_K(\phi,\delta)
    \coloneqq
        \begin{cases}
        0, & \text{if } \phi=0,
        \\[0.4em]
        \displaystyle
        \min\bbrb{
            K-2,
            \bbfl{
            \log_{\phi\delta}\Brb{\frac{\delta-1}{\delta(1-\phi)}}
            }
        },
        & \text{if } \phi>0,
        \end{cases}
    \qquad
        n_K(\phi,\delta)
    \coloneqq
        \min\bbrb{
            K-2,\
            \bbfl{ \frac{1}{\delta-1} }
        }.
    \]
    \end{definition}
    
    The amplification factor $\alpha_K(\phi,\delta)$ gives an \emph{exact} theoretical picture of how CoT errors evolve with the number of reasoning steps.
In the first regime of the definition, $\alpha_K(\phi,\delta)$ is simply a geometric sum with ratio $\phi\delta$.
In that regime, CoT errors remain controlled when $\phi\delta<1$, accumulate linearly when $\phi\delta=1$, and snowball exponentially when $\phi\delta>1$.
This provides a formal explanation for phenomena observed in CoT reasoning, including stable reasoning, gradual drift, and snowballing error accumulation~\citep{gan_rethinking_2025,bachmann_pitfalls_2025,zhu_advchain_2025}.
Related theoretical and empirical work has also emphasized that the quality and stability of intermediate reasoning steps can affect generalization~\citep{hu_unveiling_2024,tutunov_why_2024,cui_theoretical_2024,chen_reasoning_2025,madaan_text_2022}.
    
    The mixed regime $\phi<1<\delta$ shows that the story is not governed only by the product $\phi\delta$.
    Even if the hypothesis itself contracts question perturbations, the chain rule may amplify previous question mismatches strongly enough to create exponential growth.
    The maximization problem determining $\alpha_K$ (see \Cref{lem:max-representation-amplification-factor}, in Appendix \ref{appe:proofOfPositiveResult}) reflects the worst-case instance of this problem, built by injecting discrepancy through answer-dependent steps, then amplifying it through question-dependent steps.
    Thus, the bound identifies two distinct sources of CoT instability: instability of the hypothesis answer map through $\phi$, and instability of the reasoning rule through $\delta$. 
    
    With the amplification factor determined exactly, we can now present our main positive result: an upper bound on the TMR under stability of $\ell$, $f$, and $D_K$.
    \begin{theorem}
    \label{thm:stability-TMR}
    Fix any integer $K \ge 2$. 
    Let $\lambda,\phi,\delta\ge 0$, 
    $(\cX,\rho)$ be a metric space, 
    $\ell\colon\cX\times\cX\to[0,\infty)$ be a quasimetric loss function,
    $f,g\colon\cX\to\cX$ be two answer maps such that
    $
        d(f,g)
    \coloneqq
        \sup_{x\in\cX}\rho\brb{f(x),g(x)}<\infty
    $,
    $D_K$ be a $K$-step chain rule,
    and $\nu$ a distribution supported on $(D_K,g)$-recoverable points.\footnote{The assumption that $\nu$ is supported on $(D_K,g)$-recoverable points is actually not needed for the validity of \Cref{thm:stability-TMR}. 
    The reason why we still list it as an assumption is that \Cref{thm:stability-TMR} is meant to be used in conjunction with our canonical decomposition \eqref{eq:canonical-decomposition} to control the reasoning risk (see \Cref{cor:stability-reasoning-risk}), and Inequality \eqref{eq:canonical-decomposition} does need this assumption.}
    If $\ell$ is $\lambda$-stable,
    $f$ is $\phi$-stable,
    and the steps of $D_K$ are $\delta$-stable,
    then the Trajectory-Mismatch Risk (TMR) satisfies
    \[
        \underset{X \sim \nu}{\E}
            \bbsb{
                \ell\Brb{
                    f\brb{ Q_{D_K,f}^{(K)}(X) }, \,
                    f\brb{ Q_{D_K,g}^{(K)}(X) }
                }
            }
        \le
        \frac{\lambda\phi\delta}{2}\,
        \alpha_K(\phi,\delta)\,
        d(f,g).
    \]
    \end{theorem}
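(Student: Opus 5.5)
The plan is to reduce the TMR to a deterministic recursion on the distance between the two trajectories' questions, and then solve that recursion exactly. The proof has three steps.

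\emph{Step 1: from the loss to question distances.} Write the $\lambda$-stability of $\ell$ with coordinate-wise constants $\lambda_1,\lambda_2$, where $\lambda_1+\lambda_2\le\lambda$. Since $\ell(b,b)=0$, we get
\[
\ell(a,b)=\babs{\ell(a,b)-\ell(b,b)}\le\lambda_1\rho(a,b),
\qquad
\ell(a,b)=\babs{\ell(a,b)-\ell(a,a)}\le\lambda_2\rho(a,b).
\]
Hence $\ell(a,b)\le\min(\lambda_1,\lambda_2)\,\rho(a,b)\le\frac{\lambda}{2}\rho(a,b)$. This is the source of the factor $1/2$. Combining it with the $\phi$-stability of $f$, the TMR is at most $\frac{\lambda\phi}{2}\,\E\bsb{e_K(X)}$, where
\[
e_k(x)\coloneqq\rho\brb{Q^{(k)}_{D_K,f}(x),\,Q^{(k)}_{D_K,g}(x)}.
\]
It therefore suffices to show pointwise that $e_K\le\delta\,\alpha_K(\phi,\delta)\,d(f,g)$.

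\emph{Step 2: the recursion.} Set $d\coloneqq d(f,g)$. We have $e_1=0$, because $Q^{(1)}=D_K^{(1)}(x)$ does not depend on the answer map. For the answers, the triangle inequality gives
\[
\rho\brb{f(Q^{(j)}_{D_K,f}),\,g(Q^{(j)}_{D_K,g})}\le \rho\brb{f(Q^{(j)}_{D_K,f}),\,f(Q^{(j)}_{D_K,g})}+\rho\brb{f(Q^{(j)}_{D_K,g}),\,g(Q^{(j)}_{D_K,g})}\le\phi e_j+d .
\]
Let $\beta_{k,j}$ and $\gamma_{k,j}$ be the coordinate constants of $D_K^{(k)}$ on the question and answer slots, respectively. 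The first coordinate $x$ is shared by both trajectories and so contributes nothing. Then
\[
e_k\le\sum_{j<k}\brb{\beta_{k,j}e_j+\gamma_{k,j}(\phi e_j+d)},
\qquad \sum_j(\beta_{k,j}+\gamma_{k,j})\le\delta .
\]
The right-hand side is linear in the weights and the total weight is at most $\delta$. So it is bounded by putting all the weight on the best single slot. By induction this gives $e_k\le E_k$, where $E_1=0$ and
\[
E_k=\delta\max\Brb{\max_{j<k}E_j,\ \phi\max_{j<k}E_j+d}.
\]
The sequence $E$ is nondecreasing, so this simplifies to $E_k=\delta\max(E_{k-1},\phi E_{k-1}+d)$, with $E_2=\delta d$. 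The step map $\Psi(E)=\delta\max(E,\phi E+d)$ is monotone, so greedily maximizing at each step is optimal. This is the content of the max-representation \Cref{lem:max-representation-amplification-factor}, which I would invoke or reprove in this form.

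\emph{Step 3: solving the recursion.} If $\phi\ge1$, then $\phi E+d\ge E$ always, so the injection branch is always taken. This gives $E_K=\delta d\sum_{i=0}^{K-2}(\phi\delta)^i$, which is $\delta d\,\alpha_K$ in the first two cases of \Cref{def:amplification-factor}. If $\phi<1$ and $\delta\le1$, the iterates under the injection map stay below its fixed point $\delta d/(1-\phi\delta)$. This is at most the threshold $d/(1-\phi)$ below which injection dominates, so the same geometric formula holds.

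In the mixed case $\phi<1<\delta$, injection is taken while $E\le d/(1-\phi)$. After $i$ injections, $E=\delta d\,\frac{1-(\phi\delta)^{i}}{1-\phi\delta}$. The last index at which this is still below the threshold solves
\[
(\phi\delta)^{m+1}\ge \frac{\delta-1}{\delta(1-\phi)},
\]
which is exactly the floor-of-logarithm definition of $m_K$, capped at $K-2$. When $\phi\delta=1$ the sum becomes $m+1$ and the condition reduces to $i\le 1/(\delta-1)$, giving $n_K$. After that the iterates are multiplied by $\delta$ for the remaining $K-2-m$ steps. This yields the third and fourth cases of $\alpha_K$.

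The main obstacle is this last step: matching the threshold crossing exactly to the floor and minimum expressions in $m_K$ and $n_K$, including the edge cases $\phi=0$ and equality at the threshold, where both branches coincide and the choice between them is immaterial. Taking expectations of the pointwise bound then finishes the proof.
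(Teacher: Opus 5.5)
Your proof is correct. Steps 1 and 2 coincide with the paper's: the same $\lambda/2$ bound from the two coordinate constants, the same estimate $\rho(\text{answers})\le\phi e_j+d$, and the same collapse of the $\delta$-weighted sum onto the worst single slot. The difference is in how you solve the recursion.

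\textbf{How the paper solves it.} It bounds the normalized mismatch by $C_{K-1}=\max_w T_w(0)$, a maximum over all words of length at most $K-1$ in $\{T_{\mathsf Q},T_{\mathsf A}\}$. An adjacent-swap exchange argument, based on the sign of $T_{\mathsf Q}T_{\mathsf A}-T_{\mathsf A}T_{\mathsf Q}=\delta(\delta-1)$, shows that an optimal word has the form $\mathsf A^{m+1}\mathsf Q^{K-2-m}$. A separate lemma then evaluates $\max_m\delta^{K-2-m}\sum_{j\le m}(\phi\delta)^j$ by studying the sign of $b_{m+1}-b_m$.

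\textbf{How you solve it.} You observe that $\Psi=\max(T_{\mathsf Q},T_{\mathsf A})$ is monotone, so the bound is the single orbit $\Psi^{K-1}(0)$. The branch taken at each step is decided by the threshold $E\le d/(1-\phi)$. Because the orbit is nondecreasing, and $\delta>1$ keeps it above the threshold once crossed, the ``inject first, then multiply'' structure comes for free. You need neither the exchange argument nor a separate unimodality analysis, and $d=0$ needs no special treatment.

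\textbf{What each buys.} The paper's route produces an explicit max representation, which it reuses to choose the extremal word in the tightness construction. Your greedy branch sequence would serve that purpose equally well, and the argument is shorter.

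\textbf{One bookkeeping fix in Step 3.} Let $c=\frac{\delta-1}{\delta(1-\phi)}$. At the state reached after $i$ injections, injection is (weakly) preferred iff $(\phi\delta)^i\ge c$ when $\phi\delta<1$, and iff $(\phi\delta)^i\le c$ when $\phi\delta>1$. Both conditions reduce to $i\le\log_{\phi\delta}c$, and this logarithm is positive in either subcase. Hence:
\begin{itemize}
\item the last index at which injection is applied is $\lfloor\log_{\phi\delta}c\rfloor$;
\item the number of injections is $\min(K-1,\lfloor\log_{\phi\delta}c\rfloor+1)=m_K+1$;
\item the remaining $K-2-m_K$ steps multiply by $\delta$.
\end{itemize}
Your stated condition $(\phi\delta)^{m+1}\ge c$ has two problems. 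It is shifted by one: for $\phi=\tfrac12$, $\delta=\tfrac32$ and $K\ge 3$, the orbit performs two injections, so $m_K=1$, yet $(\phi\delta)^{m+1}\ge c$ holds only for $m=0$. It is also written only for the $\phi\delta<1$ direction. Your $\phi\delta=1$ computation, $i\le 1/(\delta-1)$ giving $n_K$, already has the indexing right.
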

    For a full proof of this result, see Appendix \ref{appe:proofOfPositiveResult}.
    \begin{proof}[Proof sketch.]
    The proof first tracks how discrepancies propagate through the two CoT trajectories.
    Let $\Delta_k(x)$ be the distance between the $k$-th questions generated by $f$ and $g$, and let $\Gamma_k(x)$ be the corresponding distance between the $k$-th answers.
    The stability of $f$ gives
    $
        \Gamma_k(x)\le \phi\Delta_k(x)+d(f,g),
    $
    while the $\delta$-stability of the chain-rule steps implies that each new question mismatch is controlled by the largest previous question or answer mismatch.
    The case $d(f,g)=0$ is trivial.
    When $d(f,g)>0$, after normalizing by $d(f,g)$, this leads to a worst-case discrete dynamical system with two possible update maps:
    $
    z\mapsto 
        T_{\mathsf Q}(z)
    \coloneqq
        \delta z
    $ 
    and
    $
    z\mapsto 
        T_{\mathsf A}(z)
    \coloneqq
        \delta(\phi z+1).
    $
    The first map corresponds to a chain-rule step depending on a previous question coordinate, while the second corresponds to a step depending on a previous answer coordinate, which introduces both the propagated error $\phi z$ and a fresh unit discrepancy between $f$ and $g$.
    The main technical step is then to solve exactly the resulting worst-case amplification problem over all words in the alphabet $\{\mathsf Q,\mathsf A\}$ of length at most $K-1$.
    This gives
    $
        \Delta_K(x)
        \le
        \delta \, \alpha_K(\phi,\delta) \, d(f,g),
    $
    where the explicit form of $\alpha_K$ is obtained by the max representation in Appendix \ref{appe:proofOfPositiveResult}, \Cref{lem:max-representation-amplification-factor}.
    Finally, using the fact that $\ell$ is $\lambda$-stable and vanishes on the diagonal, one obtains
$\ell(u,v)\le \lambda \rho(u,v)/2$ for all $u,v\in\cX$.
    Putting all these estimates together yields the result.
    \end{proof}
Combining our decomposition \eqref{eq:canonical-decomposition} with \Cref{thm:stability-TMR} immediately yields control of the reasoning risk.
\begin{corollary}
\label{cor:stability-reasoning-risk}
Under the assumptions of \Cref{thm:stability-TMR}, the reasoning risk satisfies
\[
    R^{D_K}_{\nu,\ell}(f,g)
    \le
    \frac{\lambda\phi\delta}{2}\,
    \alpha_K(\phi,\delta)\,
    d(f,g)
    +
    \E_{Y \sim Q_{D_K,g}^{(K)}\#\nu} \Bsb{ \ell \brb{ f(Y), \, g(Y) } }.
\]
\end{corollary}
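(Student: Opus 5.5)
The plan is to chain the canonical decomposition \eqref{eq:canonical-decomposition} with \Cref{thm:stability-TMR}. The hypotheses of \Cref{thm:stability-TMR} are: $\ell$ is a quasimetric, $\nu$ is supported on $(D_K,g)$-recoverable points, $d(f,g)<\infty$, and $\ell$, $f$ and the steps of $D_K$ are $\lambda$-, $\phi$- and $\delta$-stable. These contain exactly what the decomposition needs, namely the quasimetric property of $\ell$ and recoverability of the support of $\nu$.

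\emph{Decomposition step.} For $\nu$-almost every $X$, recoverability gives $g(X)=A^{(K)}_{D_K,g}(X)=g\brb{Q^{(K)}_{D_K,g}(X)}$. The reasoning risk is therefore $\E\bigl[\ell\bigl(f(Q^{(K)}_{D_K,f}(X)),\,g(Q^{(K)}_{D_K,g}(X))\bigr)\bigr]$. I will apply the triangle inequality of $\ell$ pointwise, with intermediate point $f\brb{Q^{(K)}_{D_K,g}(X)}$. Since $\ell\ge0$ and both sides are measurable, taking expectations (linearity, allowing $+\infty$) gives \eqref{eq:canonical-decomposition}, i.e., reasoning risk $\le$ TMR $+$ OTR.

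\emph{Bounding each term.} \Cref{thm:stability-TMR} bounds the TMR by $\frac{\lambda\phi\delta}{2}\alpha_K(\phi,\delta)d(f,g)$. For the OTR, I rewrite it by the change-of-variables formula for push-forward measures as $\E_{Y\sim Q^{(K)}_{D_K,g}\#\nu}\bigl[\ell\brb{f(Y),g(Y)}\bigr]$. This uses only measurability of $Q^{(K)}_{D_K,g}$, which follows because it is a composition of the measurable maps $D_K^{(k)}$ and $g$. Adding the two bounds gives the corollary.

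\emph{Where care is needed.} There is essentially no obstacle. The only points to check are that recoverability holds $\nu$-almost surely, which is exactly what makes the substitution $g(X)=g(Q^{(K)}_{D_K,g}(X))$ legitimate inside the expectation, and that the push-forward identity is applied correctly to the OTR.
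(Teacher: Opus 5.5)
Your proposal is correct and follows the paper's own route exactly. The paper proves the corollary by combining the decomposition \eqref{eq:canonical-decomposition} (which you correctly rederive from the triangle inequality of $\ell$ and the almost-sure identity $g(X)=g\brb{Q^{(K)}_{D_K,g}(X)}$) with the TMR bound of \Cref{thm:stability-TMR}, and by reading the OTR as the risk under the push-forward $Q_{D_K,g}^{(K)}\#\nu$.
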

We conclude this section by showing that the previous bound is tight \emph{in all regimes}.
\begin{theorem}
\label{thm:stability-reasoning-risk-tight}
Fix any integer $K \ge 2$.
Let $\lambda,\phi,\delta\ge 0$.
Then, there exist a metric space $(\cX,\rho)$, 
a quasimetric loss function
$\ell\colon \cX\times\cX\to[0,\infty)$, 
two answer maps $f,g\colon \cX\to\cX$ such that 
$
    d(f,g)
\coloneqq
    \sup_{x\in\cX}\rho\brb{f(x),g(x)}
<
    \infty
$, 
a $K$-step chain rule $D_K$, and a
distribution $\nu$ supported on $(D_K,g)$-recoverable points such that 
$\ell$ is $\lambda$-stable,
$f$ is $\phi$-stable,
the steps of $D_K$ are $\delta$-stable, 
the oracle-trajectory risk is zero,
and the bound in \Cref{cor:stability-reasoning-risk} is attained.
\end{theorem}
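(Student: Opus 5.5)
The plan is to build a one-dimensional linear example in which every inequality in the proof of \Cref{thm:stability-TMR} is tight. I take $\cX=\R$ with $\rho(u,v)=|u-v|$ and the loss $\ell(u,v)\coloneqq\frac{\lambda}{2}|u-v|$. This loss is a quasimetric, and it is $\lambda$-stable with coordinate-wise constants $\lambda/2,\lambda/2$. Since it is a multiple of $\rho$, the estimate $\ell(u,v)\le\lambda\rho(u,v)/2$ used at the end of the proof of \Cref{thm:stability-TMR} holds with equality.

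The hypothesis is $f(x)\coloneqq\phi x$, which is $\phi$-stable. The ground truth is not required to be stable, so I set $g(0)\coloneqq 0$ and $g(x)\coloneqq\phi x-1$ for $x\neq 0$. Then $d(f,g)=1$, and $f$ and $g$ agree exactly at $0$. Every answer to a nonzero question injects a fresh unit discrepancy, always with the same sign ($f$ lies above $g$). The test distribution is $\nu\coloneqq\delta_0$. The chain rule is arranged so that the oracle trajectory starting from $0$ visits only nonzero questions at steps $1,\dots,K-1$ and reaches the final question $Q^{(K)}_{D_K,g}(0)=0$.

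With this arrangement, recoverability holds because $A^{(K)}_{D_K,g}(0)=g(0)=0$. The OTR is $\ell(f(0),g(0))=0$. Moreover, $R^{D_K}_{\nu,\ell}(f,g)=\ell\bigl(f(Q^{(K)}_{D_K,f}(0)),g(0)\bigr)=\ell\bigl(f(Q^{(K)}_{D_K,f}(0)),f(Q^{(K)}_{D_K,g}(0))\bigr)$. So \eqref{eq:canonical-decomposition} holds with equality, and it suffices to make the TMR equal $\frac{\lambda\phi\delta}{2}\alpha_K(\phi,\delta)$.

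For the chain rule, I take each step to be affine in a single coordinate:
\[
D_K^{(1)}(x)\coloneqq c_1,\qquad D_K^{(k)}(x,q_1,a_1,\dots,q_{k-1},a_{k-1})\coloneqq\delta\cdot z_k+c_k ,
\]
where $z_k$ is one previously generated question or answer, chosen according to a word in $\{\mathsf Q,\mathsf A\}$. Each such step is $\delta$-stable. The offsets $c_k$ have no effect on stability or on the discrepancies; I use them only to steer the oracle trajectory. For $k<K$ they are chosen so that $Q^{(k)}_{D_K,g}(0)\neq 0$, which excludes a single value at each step. For $k=K$ the offset is chosen so that $Q^{(K)}_{D_K,g}(0)=0$.

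Since $f$ and the steps are affine with nonnegative slopes, all discrepancies carry the same sign. The recursion then holds with equality: $\Gamma_k=\phi\Delta_k+1$ on oracle questions (which are nonzero), and the new $\Delta_k$ equals $\delta$ times the chosen previous mismatch. This reproduces exactly the dynamical system with maps $T_{\mathsf Q}(z)=\delta z$ and $T_{\mathsf A}(z)=\delta(\phi z+1)$ from the proof sketch of \Cref{thm:stability-TMR}.

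The main obstacle is choosing the word that attains the maximum in \Cref{lem:max-representation-amplification-factor}, so that $\Delta_K=\delta\,\alpha_K(\phi,\delta)$. I will read the maximizing word off that lemma's proof. In the first two regimes of \Cref{def:amplification-factor}, I take steps $2,\dots,K$ to each depend on the immediately previous answer, which gives the geometric sum. In the mixed regime $\phi<1<\delta$, I take answer-dependent steps $m_K+1$ times to inject discrepancy, and then question-dependent steps for the remaining $K-2-m_K$ steps to amplify it by powers of $\delta$. Here $m_K$ is replaced by $n_K$ when $\phi\delta=1$.

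One point needs care: a question-dependent step must refer to the most recent question, so that the chain $\delta z$ compounds. Since each step may depend on any earlier coordinate, this is allowed. I will check the degenerate cases $\phi=0$ and $\delta=0$ separately; there both sides vanish, or the trajectories are constant, and the construction is immediate. With the maximizing word in place, the TMR equals $\frac{\lambda}{2}\phi\,\Delta_K=\frac{\lambda\phi\delta}{2}\alpha_K(\phi,\delta)\,d(f,g)$, so the bound in \Cref{cor:stability-reasoning-risk} is attained.
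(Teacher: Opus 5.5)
Your proposal is correct and follows essentially the paper's construction: $\cX=\R$, $f(z)=\phi z$, a $g$ lying exactly one unit below $f$ on the intermediate oracle questions but agreeing with $f$ at the terminal one, the loss $\lambda|u-v|/2$, and single-coordinate affine $\delta$-Lipschitz steps that follow the maximizing ``$\mathsf A$'s first, then $\mathsf Q$'s'' word from \Cref{lem:max-representation-amplification-factor}. The only difference is cosmetic. You set $g=f-1$ everywhere except at $0$ and steer the oracle trajectory with offsets. The paper instead prescribes points $s_1,\ldots,s_K$, perturbs $g$ only there, and takes $\nu=\delta_{s_K}$.
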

For a full proof of this result, see Appendix \ref{appe:proofOfOptimalityOfPositiveResult}.
\begin{proof}[Proof sketch.]
The construction mirrors the worst-case recursion in the proof of \Cref{thm:stability-TMR}.
Choose a word $w\in\{\mathsf A,\mathsf Q\}^{K-1}$ attaining the maximum in the $\max$ representation of $\alpha_K(\phi,\delta)$ (see \Cref{lem:max-representation-amplification-factor} in Appendix \ref{appe:proofOfPositiveResult}), and let the chain rule follow this word: an $\mathsf A$-step depends only on the previous answer, while a $\mathsf Q$-step depends only on the previous question.
We take $\cX \coloneqq \R$, $z\mapsto f(z) \coloneqq \phi z$, and define $g$ so that it differs from $f$ by exactly one unit on the intermediate oracle questions $s_1,\ldots,s_{K-1}$, but agrees with $f$ at the terminal oracle question $s_K$.
The chain rule is calibrated so that the oracle trajectory follows exactly $s_1,\ldots,s_K$, making the OTR term equal to zero, while the learner trajectory follows $s_k+E_{k-1}$, where
$
    E_r \coloneqq T_{w_r}\circ\cdots\circ T_{w_1}(0)
$, 
$
    z \mapsto
    T_{\mathsf Q}(z) \coloneqq \delta z
$,
and 
$
    z \mapsto
    T_{\mathsf A}(z) \coloneqq \delta(\phi z+1).
$
By the choice of $w$, this yields
$
    E_{K-1}
    =
    \delta\alpha_K(\phi,\delta)
$.
Finally, with the quasimetric loss $(u,v) \mapsto \ell(u,v) \coloneqq \lambda\labs{u-v}/2$, the final learner answer is separated from the oracle answer by exactly $\phi E_{K-1}$, and therefore the reasoning risk is
$
    \frac{\lambda\phi\delta}{2}\alpha_K(\phi,\delta)
$.
Since $d(f,g) = 1 < \infty$ and the OTR term is zero, the upper bound of \Cref{cor:stability-reasoning-risk} is attained.
\end{proof}

\section{Additional related work}
\label{s:related}

\paragraph{Chain of Thought.}
Since the introduction of Chain of Thought (CoT)~\citep{nye_show_2021, wei_chain--thought_2023, kojima_large_2023}, a substantial body of work has demonstrated its effectiveness in improving reasoning performance across a wide range of tasks~\citep{bao_how_2025, sprague_cot_2025, stechly_chain_2025, liu_can_2024, chen_towards_2025}, while also showing that CoT is highly sensitive to prompt design~\citep{madaan_text_2022}.
Much work has aimed to improve the performance, reliability, and efficiency of CoT~\citep{yao_tree_2023, snell_scaling_2024, wang_chain--thought_2024, ma_cot-valve_2025, zhou_inform_2023, hao_training_2025, zhu_scaling_2025, geiping_scaling_2025, saunshi_reasoning_2025, chen_theoretical_2025, cheng_compressed_2024, besta_graph_2024, cheng_integrative_2025, chen_universal_2024, malon_self-consistent_2024, besta_demystifying_2025, zhang_why_2025, liu_logic--thought_2025, chia_contrastive_2023, zhang_chain_2024, deepseek-ai_deepseek-r1_2025, ning_not_2025, feng_alphazero-like_2024, yang_markov_2025, aghajohari2026the, yuan_breaking_2026} and to develop evaluation frameworks~\citep{korbak_chain_2025, chen_unlocking_2024, chen_towards_2025}.
Known failure modes include overthinking, where unnecessarily long reasoning chains degrade performance~\citep{wu_when_2025, chen_not_2025, su_between_2025}; inconsistency, where different reasoning paths lead to conflicting conclusions~\citep{chen_reasoning_2025, madaan_text_2022, turpin_language_2023}; and snowball errors, where early mistakes propagate and amplify along the reasoning chain~\citep{gan_rethinking_2025, bachmann_pitfalls_2025, zhu_advchain_2025}.
CoT-based methods can also lack robustness under distribution shift~\citep{zhao_is_2026, mirzadeh_gsm-symbolic_2025, tang_large_2023, mondorf_beyond_2024}.
Theoretically, CoT and LLM reasoning have been studied through Transformer expressivity, optimization, and generalization~\citep{vaswani_attention_2023, perez_attention_2021, feng_towards_2023, li_chain_2024, huang_transformers_2025-1, merrill_expressive_2024, chen_theoretical_2025, barcelo_ehrenfeucht-haussler_2025, amiri_lower_2025, lee_how_2025, wen_sparse_2024, kim_transformers_2025, sanford_representational_2023, peng_limitations_2024, wen_rnns_2024, hahn_theoretical_2020, zhu_reasoning_2025, gozeten_continuous_2026, hao_training_2025, cui_theoretical_2024, li_training_2024, yang_multi-head_2025, huang_transformers_2025}, as well as from complexity-theoretic~\citep{malach_auto-regressive_2024}, learning-theoretic~\citep{wies_sub-task_2023, joshi_theory_2025, hu_unveiling_2024, tutunov_why_2024}, information-theoretic~\citep{prystawski_why_2023, ton_understanding_2025, qian_demystifying_2025, yao_compositional_2026, yong_think_2025, altabaa_cot_2025, gan_rethinking_2025}, and limitation-focused perspectives~\citep{bachmann_pitfalls_2025}.

\paragraph{Domain adaptation.}
In domain adaptation~\citep{ben-david_analysis_2006}, a learner uses knowledge from a source domain to perform well on a target domain, where the two domains are characterized by different distributions.
A rich literature studies target-risk guarantees~\citep{shen_wasserstein_2018, zhang_bridging_2019, acuna_f-domain_2021, wang_f-divergence_2024} and hardness results~\citep{ben-david_impossibility_2010, ben-david_hardness_2012, ben-david_domain_2014, redko_analysis_2019, hanneke_value_2019, dong_hardness_2026}.
These results use discrepancy measures between source and target domains, such as the $H\Delta H$-divergence~\citep{ben-david_analysis_2006, blitzer_learning_2007, ben-david_theory_2010}, transfer exponents~\citep{hanneke_value_2019}, and $f$-divergence-based discrepancies~\citep{wang_f-divergence_2024}, and have guided the design of domain-adaptation algorithms~\citep{roark_supervised_2003, jiang_instance_2007, ganin_domain-adversarial_2016, shen_wasserstein_2018, long_deep_2017, saito_maximum_2018}.

\section{Discussion and limitations}

Our formulation uses a fixed number $K$ of CoT steps.
This is a useful simplification, but it does not fully capture how LLMs operate in practice, where the number of reasoning steps may depend on the input and is only determined during generation.
One way to accommodate variable-length CoT is to define a chain rule as an infinite sequence of functions $D^{(1)},D^{(2)},\ldots$, with $D^{(k)}\colon \cX^{2k-1}\to\cX$, together with a stopping rule.
For example, when the chain rule interacts with the ground-truth answer map $g$, one may define a stopping rule $K_{D,g}\colon \cX\to\N$ and, for each input $x$, set $K_x \coloneqq K_{D,g}(x)$ and $D_{K_x} \coloneqq (D^{(1)},\ldots,D^{(K_x)})$.
The results in this paper can then be viewed as applying to each fixed value of the selected reasoning length.
We have chosen the fixed-$K$ formulation because it keeps the notation and concepts simpler, allowing us to focus on the main cost-benefit decomposition and stability phenomena.

A limitation of this work is that the current formulation treats chain rules and answer maps as deterministic, whereas practical LLMs often use stochastic decoding.
This choice allows us to put the spotlight on the main phenomenon of interest: how errors propagate along CoT trajectories and how stability controls this propagation.
We expect stochastic chain rules and stochastic answer maps to require additional notation and bookkeeping, but not to alter the basic cost-benefit picture developed here.
Another scope restriction is that we assume the relevant chain rule is available at test time.
This captures settings in which the chain rule and the answer map can be specified or learned separately, but it does not model situations in which learning the chain rule and learning the answer map are coupled.
Understanding this coupled learning problem is an important direction for future work.
Finally, our analysis focuses on test distributions supported on $(D_K,g)$-recoverable points.
This assumption isolates the behavior of CoT on inputs whose answers can in principle be recovered by the oracle-guided chain rule.
If the test distribution also assigns mass to non-recoverable points, one can partition the distribution into its recoverable and non-recoverable parts: the former is controlled by the TMR/OTR analysis developed here, while the latter requires an additional oracle-mismatch term to be controlled separately.
We leave this more general, but less transparent, formulation to future work.

    \newpage
    \bibliographystyle{abbrvnat}
    \bibliography{references}

    \clearpage
    \appendix

    \section{Proofs of the main results}
    \label[appendix]{appe:proofs}

    In this section, we present the full proofs of all our main results.
    
    \subsection{Proof for Theorem \ref{thm:main-no-free-lunch}}
    \label[appendix]{appe:proof-of-no-free-lunch}
    
    We split the proof of the theorem into three parts, which imply the result in the three cases where the stability of $f$, $\ell$, or the steps of $D_K$ is removed.
    
    If the stability of the hypothesis $f$ is removed, we obtain the following result.
    
    \begin{lemma}[No free lunch without stability of the hypothesis]
    \label{lem:nfl-no-model-lipschitz}
    For any integer $K \ge 2$, any $M > 0$, and any $\e > 0$,
    there exist a metric space $(\cX,\rho)$, a quasimetric loss function
    $\ell \colon \cX \times \cX \to [0,M]$ that is
    $\e$-stable, two functions $f,g \colon \cX \to \cX$ such that
    $
        \sup_{x \in \cX} \rho \brb{ f(x),g(x) } \le \e
    $,
    a $K$-step chain rule $D_K$ whose steps are $\e$-stable, and a distribution $\nu$ on $\cX$ whose support
    consists of $(D_K,g)$-recoverable points such that \eqref{eq:canonical-decomposition} holds
    with equality and
    \[
        \underset{X \sim \nu}{\mathbb{E}}
        \bbsb{
            \ell\Brb{
                f\brb{ Q_{D_K,f}^{(K)}(X) }, \,
                f\brb{ Q_{D_K,g}^{(K)}(X) }
            }
        }
        = M,
        \quad
        \underset{X \sim \nu}{\mathbb{E}}
        \bbsb{
            \ell\Brb{
                f \brb{ Q_{D_K,g}^{(K)}(X) }, \, 
                g \brb{ Q_{D_K,g}^{(K)}(X) }
            }
        }
        = 0.
    \]
    \end{lemma}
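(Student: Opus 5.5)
We will build an explicit one-dimensional example on $\cX \coloneqq \R$ with $\rho(u,v) \coloneqq |u-v|$, in which every ingredient except $f$ is well behaved, and put a narrow spike into $f$ exactly where the learner's final question lands.

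\textbf{Loss.} Take $\ell(u,v) \coloneqq \min\bigl(M, \tfrac{\e}{2}|u-v|\bigr)$. This is a truncated multiple of a metric, so it is a quasimetric with values in $[0,M]$. Since $x\mapsto\min(M,x)$ and the absolute value are both $1$-Lipschitz,
\[
|\ell(u,v)-\ell(u',v')| \le \tfrac{\e}{2}|u-u'|+\tfrac{\e}{2}|v-v'|,
\]
so $\ell$ is $\e$-stable with coordinate-wise constants $\e/2,\e/2$.

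\textbf{Chain rule.} Let $D_K^{(k)} \equiv 0$ for $1\le k\le K-1$, which is $0$-stable. Let the last step depend only on the last answer coordinate:
\[
D_K^{(K)}(x,q_1,a_1,\dots,q_{K-1},a_{K-1}) \coloneqq 1+\e\, a_{K-1}.
\]
This step is $\e$-stable, with Lipschitz constant $\e$ on $a_{K-1}$ and $0$ on every other coordinate. Along either trajectory, $Q^{(K-1)} = 0$ (for $K=2$ this is $D_K^{(1)}\equiv 0$; otherwise it is a constant middle step). Hence $A^{(K-1)}_{D_K,h}=h(0)$ for $h\in\{f,g\}$, and $Q^{(K)}_{D_K,h} = 1+\e h(0)$.

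\textbf{Answer maps.} Fix $\eta \coloneqq \min(\e,1/2)$ and $t \coloneqq 1+\e\eta$. Set $g(0)\coloneqq 0$ and $f(0)\coloneqq\eta$, so the oracle's final question is $1$ while the learner's is $t\neq 1$. Let $C\coloneqq 2M/\e+\eta$, and define $f(t)=g(t)\coloneqq C$, $f(1)=g(1)\coloneqq 0$, and $f=g\coloneqq 0$ elsewhere. Then $\sup_x|f(x)-g(x)| = \eta\le\e$. The spike at $t$ makes $f$ (and $g$) non-Lipschitz, which is permitted because stability of $f$ is exactly the assumption dropped in this lemma.

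\textbf{Test distribution.} Let $\nu\coloneqq\delta_{2}$, choosing the point $2\notin\{0,1,t\}$; this holds since $t\le 1+\e\cdot\tfrac12\cdot\ldots$ needs care, so if $t=2$ we simply pick any other point outside $\{0,1,t\}$. The point is $(D_K,g)$-recoverable because $A^{(K)}_{D_K,g}(2)=g(1)=0=g(2)$.

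\textbf{Verification.}
\begin{itemize}
\item OTR: $\ell(f(1),g(1))=0$.
\item TMR: $\ell(f(t),f(1))=\min(M,\tfrac{\e}{2}C)=M$.
\item Reasoning risk: $\ell(A^{(K)}_{D_K,f}(2),g(2))=\ell(C,0)=M$.
\end{itemize}
The reasoning risk therefore equals TMR $+$ OTR, so \eqref{eq:canonical-decomposition} holds with equality.

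\textbf{Main obstacle.} The delicate point is keeping the OTR exactly zero while still creating a trajectory mismatch. The fix is to let $f$ and $g$ differ only at an intermediate question ($0$) and to have them agree at both terminal questions ($1$ and $t$). One must also check that $t\ne1$, that the chosen support point avoids the special points, and that the bound $d(f,g)\le\e$ is not broken by the spike; the spike is placed identically in $f$ and $g$ precisely so that it does not contribute to $d(f,g)$.
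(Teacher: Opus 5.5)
Your construction is correct and is essentially the paper's: the same truncated loss $\min\{\frac{\e}{2}|u-v|,M\}$, with $f$ and $g$ differing at a single intermediate question, an answer-dependent step turning that discrepancy into a shift of the learner's question, and a spike shared by $f$ and $g$ at the learner's terminal question. The differences are cosmetic: you inject the shift at the last step on $\R$, whereas the paper injects it at step 2 and propagates it through contracting question-dependent steps on $[0,B]$. Your garbled remark about the test point is harmless, and in fact you only need $x\neq t$, so $\nu=\delta_0$ works directly since $t>1$.
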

    
    \begin{proof}
    Fix any integer $K \ge 2$, any $M > 0$, and any $\e > 0$.
    Let
    \[
        L \coloneqq \min \bcb{1,\nicefrac{\e}{2}},
        \qquad
        \eta \coloneqq \min \bcb{\e,\nicefrac{1}{2L^{K-1}}},
        \qquad
        x_\star \coloneqq L^{K-1}\eta,
        \qquad
        Y \coloneqq \nicefrac{2M}{\e}.
    \]
    By definition, $x_\star \in (0,1/2]$.
    Let
    \[
        B \coloneqq \max \bcb{1,\e,Y}
        \qquad
        \text{and}
        \qquad
        \cX \coloneqq [0,B].
    \]
    For any $x,y \in \cX$, let
    \[
        \rho(x,y) \coloneqq \labs{x-y}
        \qquad
        \text{and}
        \qquad
        \ell(x,y) \coloneqq \min \lcb{\frac{\e}{2} \labs{x-y} , \, M}.
    \]
    Since $\rho$ is a metric, the function
\[
    (x,y)\mapsto \min\left\{ \frac{\e}{2}\rho(x,y),\, M \right\}
\]
is a quasimetric bounded by $M$.
Indeed, for all $x,y,z\in\cX$,
\[
    \ell(x,y)
    \le
    \min\left\{ \frac{\e}{2}\rho(x,z)+\frac{\e}{2}\rho(z,y),\, M \right\}
    \le
    \ell(x,z)+\ell(z,y).
\]
Moreover, since $t\mapsto \min\{t,M\}$ is $1$-Lipschitz, for all $x,x',y,y'\in\cX$,
\[
    |\ell(x,y)-\ell(x',y')|
    \le
    \frac{\e}{2}\rho(x,x')
    +
    \frac{\e}{2}\rho(y,y').
\]
Thus, $\ell$ is $\e$-stable.
    
    Now define $f,g \colon \cX \to \cX$, for all $x \in \cX$, by
    \[
        f(x)
        \coloneqq
        \begin{cases}
        Y, & \text{if } x=x_\star,\\
        \eta, & \text{if } x=1,\\
        0, & \text{otherwise,}
        \end{cases}
    \]
    and
    \[
        g(x)
        \coloneqq
        \begin{cases}
        Y, & \text{if } x=x_\star,\\
        0, & \text{otherwise.}
        \end{cases}
    \]
    Since $x_\star \neq 1$, the two functions differ only at $x=1$.
    Therefore,
    \[
        \sup_{x \in \cX} \rho\brb{f(x),g(x)}
        =
        \eta
        \le
        \e.
    \]
    Let the testing distribution be the Dirac measure at $0$,
    \[
        \nu \coloneqq \delta_0.
    \]
    We now define the chain rule.
    Set, for all $x \in \cX$,
    \[
        D_K^{(1)}(x) \coloneqq 1.
    \]
    For the second step, define, for any $x,q_1,a_1 \in \cX$,
    \[
        D_K^{(2)}(x,q_1,a_1)
        \coloneqq
        L a_1.
    \]
    For every $k \in \{3,\dots,K\}$, define, for any
    $x,q_1,a_1,\ldots,q_{k-1},a_{k-1} \in \cX$,
    \[
        D_K^{(k)}(x,q_1,a_1,\dots,q_{k-1},a_{k-1})
        \coloneqq
        L q_{k-1}.
    \]
    Since $L \le 1$, all these maps take values in $\cX$.
    
    The steps of the chain rule are $\e$-stable.
    Indeed, $D_K^{(1)}$ is $0$-stable.
    Moreover, $D_K^{(2)}$ is $L$-stable, since it is $L$-Lipschitz in the coordinate $a_1$ and does not depend on any other coordinate.
    For every $k \in \{3,\dots,K\}$, $D_K^{(k)}$ is $L$-stable, since it is $L$-Lipschitz in the coordinate $q_{k-1}$ and does not depend on any other coordinate.
    Since $L \le \e$, all steps of $D_K$ are $\e$-stable.
    
    We now compute the oracle's trajectory for the initial prompt $x=0$.
    Since
    \begin{align*}
    Q^{(1)}_{D_K,g}(0) &= 1,
    \\
    A^{(1)}_{D_K,g}(0) &= g(1)=0,
    \end{align*}
    we get
    \begin{align*}
    Q^{(2)}_{D_K,g}(0) &= L A^{(1)}_{D_K,g}(0)=0,
    \\
    A^{(2)}_{D_K,g}(0) &= g(0)=0.
    \end{align*}
    Moreover, for all later steps $k \in \{3,\dots,K\}$, since the chain rule maps
    $q_{k-1}$ to $Lq_{k-1}$, we get
    \begin{align*}
    Q^{(k)}_{D_K,g}(0) &= 0,
    \\
    A^{(k)}_{D_K,g}(0) &= g(0)=0.
    \end{align*}
    In particular,
    \[
        A^{(K)}_{D_K,g}(0)=0=g(0),
    \]
    therefore $0$ is $(D_K,g)$-recoverable, and the support of $\nu$ is contained in the
    $(D_K,g)$-recoverable set.
    
    We now compute the learner's trajectory for the initial prompt $x=0$.
    Since
    \begin{align*}
    Q^{(1)}_{D_K,f}(0) &= 1,
    \\
    A^{(1)}_{D_K,f}(0) &= f(1)=\eta,
    \end{align*}
    the second question is
    \[
        Q^{(2)}_{D_K,f}(0)
        =
        L\eta.
    \]
    For every $k \in \{3,\dots,K\}$, the chain rule then gives
    \[
        Q^{(k)}_{D_K,f}(0)
        =
        L Q^{(k-1)}_{D_K,f}(0).
    \]
    Hence, by induction,
    \[
        Q^{(k)}_{D_K,f}(0)
        =
        L^{k-1}\eta
        \quad
        \text{for all } k \in \{2,\dots,K\}.
    \]
    In particular,
    \[
        Q^{(K)}_{D_K,f}(0)
        =
        L^{K-1}\eta
        =
        x_\star.
    \]
    Therefore,
    \[
        A^{(K)}_{D_K,f}(0)
        =
        f(x_\star)
        =
        Y.
    \]
    
    Consequently, the reasoning risk is
    \[
        R^{D_K}_{\nu,\ell}(f,g)
        =
        \underset{X \sim \nu}{\mathbb{E}}
        \bbsb{
            \ell\Brb{
                A^{(K)}_{D_K,f}(X), \
                g(X)
            }
        }
    =
        \ell\brb{Y,g(0)}
    =
        \ell\brb{Y,0}
    =
        \min\left\{\frac{\e}{2}Y,M\right\}
    =
        M.
    \]
    The trajectory-mismatch risk is
    \[
        \underset{X \sim \nu}{\mathbb{E}}
        \bbsb{
            \ell\Brb{
                f \brb{ Q^{(K)}_{D_K,f}(X) },\ 
                f \brb{ Q^{(K)}_{D_K,g}(X) }
            }
        }
    =
        \ell\brb{f(x_\star),f(0)}
    =
        \ell\brb{Y,0}
    =
        M,
    \]
    and the oracle-trajectory risk is
    \[
    \underset{X \sim \nu}{\mathbb{E}}
        \bbsb{
            \ell\Brb{
            f\brb{ Q^{(K)}_{D_K,g}(X) }, \
            g\brb{ Q^{(K)}_{D_K,g}(X) }
        }
    }
    =
        \ell\brb{f(0),g(0)}
    =
        \ell\brb{0,0}
    =
        0.
    \]
    This proves, in particular, that the right-hand side of the decomposition is $M+0=M$, which equals the
    reasoning risk, hence \eqref{eq:canonical-decomposition} holds with equality.
    \end{proof}

    If the stability of the loss $\ell$ is removed, we obtain the following result.
    
    \begin{lemma}[No free lunch without stability of the loss]
    \label{lem:nfl-no-loss-lipschitz}
    For any integer $K \ge 2$, any $M > 0$, and any $\e > 0$,
    there exist a metric space $(\cX,\rho)$, a quasimetric loss function
    $\ell \colon \cX \times \cX \to [0,M]$,
    two $\e$-stable functions $f,g \colon \cX \to \cX$ such that
    $
        \sup_{x \in \cX} \rho \brb{ f(x),g(x) } \le \e
    $,
    a $K$-step chain rule $D_K$ whose steps are $\e$-stable,
    and a distribution $\nu$ on $\cX$ whose support
    consists of $(D_K,g)$-recoverable points such that  \eqref{eq:canonical-decomposition} holds with equality and
    \[
        \underset{X \sim \nu}{\mathbb{E}}
        \bbsb{
            \ell\Brb{
                f\brb{ Q_{D_K,f}^{(K)}(X) }, \,
                f\brb{ Q_{D_K,g}^{(K)}(X) }
            }
        }
        = M,
        \quad
        \underset{X \sim \nu}{\mathbb{E}}
        \bbsb{
            \ell\Brb{
                f \brb{ Q_{D_K,g}^{(K)}(X) }, \, 
                g \brb{ Q_{D_K,g}^{(K)}(X) }
            }
        }
        = 0.
    \]
    \end{lemma}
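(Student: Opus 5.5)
Following the template of \Cref{lem:nfl-no-model-lipschitz}, I will build a one-dimensional example in which $f$, $g$ and all steps of $D_K$ are strongly contractive. The learner's trajectory and the oracle's trajectory will then end at two distinct but arbitrarily close final questions. The non-Lipschitz loss will convert this tiny separation into loss $M$.

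\textbf{Space, maps and loss.} Take $\cX \coloneqq [0,1]$ with $\rho(x,y)\coloneqq|x-y|$, and set $\eta \coloneqq \min\{\e,1\}/2$. Let $g\equiv 0$ and $f(x)\coloneqq \eta x$. Then $f$ is $\eta$-stable and $g$ is $0$-stable, both are $\e$-stable, and $\sup_x \rho(f(x),g(x)) = \eta \le \e$. For the loss, take the discrete metric scaled by $M$: $\ell(x,y)\coloneqq M\cdot\mathbf{1}\{x\neq y\}$. It vanishes on the diagonal, satisfies the triangle inequality, and takes values in $[0,M]$. It is of course not Lipschitz with respect to $\rho$, and this is exactly the missing stability.

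\textbf{Chain rule.} Let $D_K^{(1)}(x)\coloneqq 1$, let $D_K^{(2)}(x,q_1,a_1)\coloneqq \eta a_1$, and for $k\ge 3$ let $D_K^{(k)}(\dots,q_{k-1},a_{k-1})\coloneqq \eta q_{k-1}$. Each step depends on a single coordinate with Lipschitz constant $\eta\le\e$, so all steps are $\e$-stable, and all values stay in $[0,1]$. Take $\nu\coloneqq\delta_0$.

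\textbf{Trajectories.} The oracle sees $Q^{(1)}=1$ and $A^{(1)}=g(1)=0$, hence $Q^{(k)}=0$ and $A^{(k)}=0$ for all $k\ge2$. So $A^{(K)}_{D_K,g}(0)=0=g(0)$, and $0$ is $(D_K,g)$-recoverable. The learner sees $Q^{(1)}=1$ and $A^{(1)}=f(1)=\eta$, then $Q^{(2)}=\eta^2$, and by induction $Q^{(k)}_{D_K,f}(0)=\eta^{k}$ for $k\ge2$. Hence $f(Q^{(K)}_{D_K,f}(0))=\eta^{K+1}\neq 0 = f(Q^{(K)}_{D_K,g}(0))$, because $\eta>0$.

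\textbf{Risks.} The TMR is $\ell(\eta^{K+1},0)=M$. The OTR is $\ell(f(0),g(0))=\ell(0,0)=0$. The reasoning risk is $\ell(\eta^{K+1},g(0))=M$, so \eqref{eq:canonical-decomposition} holds with equality.

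The only step needing care is ensuring that the final learner answer is \emph{nonzero} while the oracle answer is exactly equal for $f$ and $g$. This holds because $f$ is injective with $f(0)=0=g(0)$ and $\eta>0$.
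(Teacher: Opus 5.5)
Your proposal is correct and is essentially the paper's construction: $\cX=[0,1]$, the scaled discrete loss $M\cdot\mathbf{1}\{x\neq y\}$, $g\equiv 0$, a linear contraction $f$, and contractive chain-rule steps, so that the learner's final answer is a tiny nonzero number while the oracle's is exactly $0$. The differences are cosmetic. The paper takes $\nu=\delta_1$ and answer-dependent steps $L a_{k-1}$ for every $k\ge 2$, with $L=\min\{\e,1\}$, which gives a final learner answer $L^{2K-1}$ in place of your $\eta^{K+1}$.
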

    
    \begin{proof}
    Fix any integer $K \ge 2$, any $M > 0$, and any $\e > 0$.
    Let
    \[
        L \coloneqq \min \bcb{\e,1},
    \]
    and let $\cX \coloneqq [0,1]$.
    For any $x,y \in \cX$, let
    \[
        \rho(x,y) \coloneqq \labs{x-y}.
    \]
    Define the loss function $\ell \colon \cX \times \cX \to [0,M]$, for all $x,y \in \cX$, by
    \[
        \ell(x,y)
        \coloneqq
        \begin{cases}
        0, & \text{if } x=y,\\
        M, & \text{if } x\neq y.
        \end{cases}
    \]
    Then $\ell$ satisfies the triangle inequality and for all $x\in \cX$, $\ell(x,x)=0$. Hence $\ell$ is a quasimetric.
    Indeed, for any $x,y,z \in \cX$, if $x=y$, then $\ell(x,y)=0$ and the claim is immediate.
    If $x\neq y$, then at least one of $x\neq z$ or $z\neq y$ must hold, and therefore
    \[
        \ell(x,z)+\ell(z,y) \ge M = \ell(x,y).
    \]
    Now define $f,g \colon \cX \to \cX$, for all $x \in \cX$, by
    \[
        f(x) \coloneqq Lx,
        \qquad
        g(x) \coloneqq 0.
    \]
    Then $f$ is $L$-stable and $g$ is $0$-stable.
    Since $L \le \e$, both functions are $\e$-stable.
    Moreover,
    \[
        \sup_{x \in \cX} \rho\brb{f(x),g(x)}
        =
        \sup_{x \in [0,1]} Lx
        =
        L
        \le
        \e.
    \]
    Let the testing distribution be the Dirac measure at $1$,
    \[
        \nu \coloneqq \delta_1.
    \]
    
    We now define the chain rule.
    Set, for all $x \in \cX$,
    \[
        D_K^{(1)}(x) \coloneqq 1.
    \]
    For every $k \in \{2,\dots,K\}$, define, for any $x,q_1,a_1,\ldots,q_{k-1},a_{k-1} \in \cX$,
    \[
        D_K^{(k)}(x,q_1,a_1,\dots,q_{k-1},a_{k-1})
        \coloneqq
        L a_{k-1}.
    \]
    Since $a_{k-1} \in [0,1]$ and $L \le 1$, the right-hand side belongs to $\cX$.
    
    The steps of the chain rule are $\e$-stable.
    Indeed, $D_K^{(1)}$ is $0$-stable.
    Moreover, for every $k \in \{2,\dots,K\}$, the step $D_K^{(k)}$ is $L$-stable, since it is $L$-Lipschitz in the coordinate $a_{k-1}$ and does not depend on any other coordinate.
    Since $L\le \e$, all steps of $D_K$ are $\e$-stable.
    
    We now compute the oracle's trajectory for the initial prompt $x=1$.
    Since
    \begin{align*}
    Q^{(1)}_{D_K,g}(1) &= 1,
    \\
    A^{(1)}_{D_K,g}(1) &= g(1)=0,
    \end{align*}
    then, for every $k \in \{2,\dots,K\}$, we get
    \begin{align*}
    Q^{(k)}_{D_K,g}(1) &= L A^{(k-1)}_{D_K,g}(1)=0,
    \\
    A^{(k)}_{D_K,g}(1) &= g(0)=0 .
    \end{align*}
    In particular,
    \[
        A^{(K)}_{D_K,g}(1)=0=g(1),
    \]
    therefore $1$ is $(D_K,g)$-recoverable, and the support of $\nu$ is contained in the
    $(D_K,g)$-recoverable set.
    
    We now compute the learner's trajectory for the initial prompt $x=1$.
    Since
    \begin{align*}
    Q^{(1)}_{D_K,f}(1) &= 1,
    \\
    A^{(1)}_{D_K,f}(1) &= f(1)=L,
    \end{align*}
    and since, for every $k \in \{2,\dots,K\}$,
    \[
        Q^{(k)}_{D_K,f}(1)=L A^{(k-1)}_{D_K,f}(1),
        \qquad
        A^{(k)}_{D_K,f}(1)=f\brb{Q^{(k)}_{D_K,f}(1)}
        =
        L Q^{(k)}_{D_K,f}(1),
    \]
    we have, by induction,
    \[
        A^{(k)}_{D_K,f}(1)=L^{2k-1}
        \quad
        \text{for all } k \in [K].
    \]
    In particular,
    \[
        A^{(K)}_{D_K,f}(1)=L^{2K-1}>0.
    \]
    
    Therefore, the reasoning risk is
    \[
        R^{D_K}_{\nu,\ell}(f,g)
        =
        \underset{X \sim \nu}{\mathbb{E}}
        \bbsb{
            \ell\Brb{
                A^{(K)}_{D_K,f}(X), \
                g(X)
            }
        }
    =
        \ell \brb{ L^{2K-1}, \ g(1) }
    =
        \ell \brb{ L^{2K-1},0 }
    =
        M,
    \]
    where the last equality follows from $L^{2K-1}>0$.
    The trajectory-mismatch risk is
    \[
        \underset{X \sim \nu}{\mathbb{E}}
        \bbsb{
            \ell\Brb{
                f \brb{ Q^{(K)}_{D_K,f}(X) },\ 
                f \brb{ Q^{(K)}_{D_K,g}(X) }
            }
        }
    =
        \ell \brb{ A^{(K)}_{D_K,f}(1), \ A^{(K)}_{D_K,g}(1) }
    =
        \ell \brb{ L^{2K-1},0 }
    =
        M,
    \]
    and the oracle-trajectory risk is
    \[
    \underset{X \sim \nu}{\mathbb{E}}
        \bbsb{
            \ell\Brb{
            f\brb{ Q^{(K)}_{D_K,g}(X) }, \
            g\brb{ Q^{(K)}_{D_K,g}(X) }
        }
    }
    =
        \ell \brb{ f(0),\ g(0) }
    =
        \ell \brb{ 0,0 }
    =
        0 .
    \]
    This proves, in particular, that the right-hand side of the decomposition is $M+0=M$, which equals the
    reasoning risk, hence \eqref{eq:canonical-decomposition} holds with equality.
    \end{proof}

    If the stability of the steps of the chain rule $D_K$ is removed, we obtain the following result.

    \begin{lemma}
    [No free lunch without stability of the chain rule]
    \label{lem:nfl-no-chain-lipschitz}
    For any integer $K \ge 2$, any $M > 0$, and any $\e > 0$,
    there exist a metric space $(\cX,\rho)$, a quasimetric loss function
    $\ell \colon \cX \times \cX \to [0,M]$ that is $\e$-stable,
    two $\e$-stable functions $f,g \colon \cX \to \cX$ such that $\sup_{x \in \cX} \rho \brb{ f(x),g(x) } \le \e$, a $K$-step chain rule $D_K$, and a
    distribution $\nu$ on $\cX$ whose support consists of $(D_K,g)$-recoverable points such that \eqref{eq:canonical-decomposition} holds with equality and
    \[
        \underset{X \sim \nu}{\mathbb{E}}
        \bbsb{
            \ell\Brb{
                f\brb{ Q_{D_K,f}^{(K)}(X) }, \,
                f\brb{ Q_{D_K,g}^{(K)}(X) }
            }
        }
        = M,
        \quad
        \underset{X \sim \nu}{\mathbb{E}}
        \bbsb{
            \ell\Brb{
                f \brb{ Q_{D_K,g}^{(K)}(X) }, \, 
                g \brb{ Q_{D_K,g}^{(K)}(X) }
            }
        }
        = 0.
    \]
    \end{lemma}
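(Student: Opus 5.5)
The plan is to mirror the two previous constructions on $\cX\coloneqq[0,1]$ with $\rho(x,y)\coloneqq\labs{x-y}$. This time $f$, $g$ and $\ell$ are all Lipschitz, and a single discontinuous step of the chain rule turns a tiny answer discrepancy into a macroscopic question mismatch. Concretely, set
\[
    \ell(x,y)\coloneqq\min\bcb{\tfrac{\e}{2}\labs{x-y},\,M}.
\]
Exactly as in the proof of \Cref{lem:nfl-no-model-lipschitz}, this is an $\e$-stable quasimetric bounded by $M$. To make the loss reach $M$ I need a large output separation; since $\cX=[0,1]$, I would rather rescale $\cX\coloneqq[0,B]$ with $B\coloneqq\max\{1,2M/\e\}$. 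I then need $f$ to move points by order $B$, yet $f$ must be $\e$-stable and $\e$-close to $g$.

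To reconcile these, I keep $f$ and $g$ close and Lipschitz, and let the chain rule produce the far-apart final questions. Take $f$ and $g$ to be translations or clipped constants. A convenient choice is $g(x)\coloneqq 0$ and $f(x)\coloneqq\eta$ for all $x$, with $\eta\coloneqq\e$ (shrunk if needed to stay in $\cX$). Both maps are $0$-stable, hence $\e$-stable, and $\sup_x\rho(f(x),g(x))=\eta\le\e$.

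The problem is that a constant $f$ has TMR equal to zero, so I modify it to $f(x)\coloneqq\min\{\e x+\eta',B\}$-type affine maps. More cleanly, I take $f(x)=g(x)+\eta$ with $g(x)\coloneqq\min\{\e x, B\}$ and check that both stay in $\cX$ and are $\e$-stable with $d(f,g)\le\eta$. Then, for $\nu\coloneqq\delta_0$, I use the chain rule
\[
    D^{(1)}_K(x)\coloneqq 0,
    \qquad
    D^{(2)}_K(x,q_1,a_1)\coloneqq B\,\mathbf 1\{a_1\neq 0\},
    \qquad
    D^{(k)}_K(\dots)\coloneqq q_{k-1}\ \text{ for } k\ge3.
\]
The indicator can be written as a case split, since the macro \texttt{\textbackslash mathbf} is fine in LaTeX. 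This step is deliberately unstable. Along the oracle trajectory, $a_1=g(0)=0$, so all questions equal $0$ and the answers are $g(0)=0$. Recoverability then needs $g(0)=0=A^{(K)}_{D_K,g}(0)$, which holds. Along the learner trajectory, $a_1=f(0)=\eta\neq0$, so the final question is $B$. The later steps $k\ge3$ are $1$-Lipschitz in $q_{k-1}$; to make them $\e$-stable I would instead let them ignore inputs and copy via the discontinuous step only. That is, I set $D^{(k)}_K\coloneqq B\,\mathbf 1\{a_1\neq0\}$ for every $k\ge2$, which is allowed because no stability of $D_K$ is required.

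The final computation runs as follows. The TMR is
\[
    \ell\brb{f(B),f(0)}=\min\bcb{\tfrac{\e}{2}\labs{f(B)-f(0)},M},
\]
and I need $\labs{f(B)-f(0)}\ge 2M/\e$. This is where the main obstacle lies: an $\e$-Lipschitz $f$ only moves by $\e B$. I therefore choose $B\coloneqq\max\{1,\,2M/\e^2\}$ and $g(x)\coloneqq\e x$ on $\cX=[0,B]$ (truncated so it maps into $\cX$, which holds when $\e\le1$; otherwise replace $\e$ by $\min\{\e,1\}$ throughout). With $f(x)\coloneqq\min\{g(x)+\eta,B\}$, I get $\labs{f(B)-f(0)}\ge\e B-\eta$, and I take $B$ slightly larger to absorb $\eta$, so that the TMR equals $M$. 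The OTR is $\ell(f(0),g(0))=\ell(\eta,0)=\e\eta/2$, which is not zero. To fix this, I make $f$ and $g$ agree at $0$ but differ at the first question: set $D^{(1)}_K\coloneqq 1$ and use the indicator on $a_1\neq g(1)$, with $f$ and $g$ agreeing near $0$. An example is $f(x)\coloneqq g(x)+\eta\min\{x,1\}$, which is $(\e+\eta)$-stable; I would rescale so the constants are at most $\e$.

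Once the OTR is $0$ and the TMR is $M$, the reasoning risk is
\[
    \ell\brb{f(B),g(0)}=\ell\brb{f(B),0}=M,
\]
so \eqref{eq:canonical-decomposition} holds with equality. The main obstacle is this bookkeeping of constants: keeping $f$, $g$ and $\ell$ all $\e$-stable and $\e$-close while still producing a loss of exactly $M$. I would resolve it by letting the unstable step jump to a far endpoint $B$ of order $M/\e^2$.
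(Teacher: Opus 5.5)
Your final construction is correct and is essentially the paper's proof. It uses $\e$-Lipschitz $f,g$ and the truncated loss $\min\{\frac{\e}{2}|x-y|,M\}$ on an interval of length of order $M/\e^2$, plus one discontinuous second step that sends the learner to the far end whenever $a_1$ differs from the oracle's first answer, while the oracle lands on a point where $f=g$; the paper merely swaps the roles of $0$ and $1$, taking $\nu=\delta_1$, $D^{(1)}_K\equiv 0$, $f(x)=\e(1-x)$ versus $g\equiv 0$ on $[0,1]$, and a jump to $1+2M/\e^2$. When you write it up, drop the false starts and make every indicator test $a_1\neq g(1)$, since the earlier $\mathbf{1}\{a_1\neq 0\}$ would also fire on the oracle trajectory once $D^{(1)}_K\equiv 1$ (because $g(1)\neq 0$); also fix explicit constants, e.g.\ $\e'\coloneqq\min\{\e,1\}$, $g(x)\coloneqq \e' x/2$, $f(x)\coloneqq \e'\brb{x+\min\{x,1\}}/2$ and $B\ge\max\{1,4M/(\e\e')\}$, which keep everything in $[0,B]$ without truncation, make $f,g$ $\e$-stable and $\e$-close, and give $\frac{\e}{2}f(B)\ge M$.
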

    
    \begin{proof}
    Fix any integer $K \ge 2$, any $M > 0$, and any $\e > 0$.
    Let $B \coloneqq \max \bcb{ 1 + \nicefrac{2M}{\e^2}, \nicefrac{2M}{\e}, \e  }$ and $\cX \coloneqq [0,B]$.
    For any $x,y \in \cX$, let $\rho(x,y) \coloneqq \labs{ x - y }$ and 
    \[
        \ell(x,y) 
    \coloneqq 
        \min \lcb{ \frac{\e}{2} \rho(x,y), \, M }.
    \]
    Since $\rho$ is a metric, the function
\[
    (x,y)\mapsto \min\left\{ \frac{\e}{2}\rho(x,y),\, M \right\}
\]
is a quasimetric bounded by $M$.
Indeed, for all $x,y,z\in\cX$,
\[
    \ell(x,y)
    \le
    \min\left\{ \frac{\e}{2}\rho(x,z)+\frac{\e}{2}\rho(z,y),\, M \right\}
    \le
    \ell(x,z)+\ell(z,y).
\]
    Moreover, for any $x,x',y,y' \in \cX$,
    \[
        \babs{ \ell(x,y)-\ell(x',y') }
        \le
        \babs{ \ell(x,y)-\ell(x',y) }
        +
        \babs{ \ell(x',y)-\ell(x',y') }
        \le
        \frac{\e}{2}\rho(x,x')
        +
        \frac{\e}{2}\rho(y,y') .
    \]
    Thus, $\ell$ is $\e$-stable.
    Then, define $f \colon \mathcal X \to \mathcal X$, for all $x\in \cX$, by
    \[
        f(x)
    \coloneqq
        \begin{cases}
        \varepsilon(1-x), & \text{if } x \in [0,\ 1],\\
        \varepsilon(x-1), & \text{if } x \in [1,\ 1+\nicefrac{2M}{\e^2}],\\
        \nicefrac{2M}{\e}, & \text{if } x \in [1+\nicefrac{2M}{\e^2},\ B],
        \end{cases}
    \]
    and $g \colon \mathcal X \to \mathcal X$, for all $x\in \cX$, by
    \[
        g(x)
    \coloneqq
        \begin{cases}
        0, & \text{if } x \in [0,1],\\
        \varepsilon(x-1), & \text{if } x \in [1,1+\nicefrac{2M}{\e^2}],\\
        \nicefrac{2M}{\e}, & \text{if } x \in [1+\nicefrac{2M}{\e^2}, B].
        \end{cases}
    \]
    By definition, both functions are $\e$-stable and 
    $
    \sup_{x \in \cX} \babs{ f(x) - g(x) } = \e
    $.
    Let the testing distribution be the Dirac measure at $1$, 
    $
        \nu \coloneqq \delta_1
    $.
    We now define the chain rule.
    Set, for all $x \in \cX$,
    \[
        D_K^{(1)}(x) \coloneqq 0.
    \]
    For the second step, define, for any $x,q_1,a_1 \in \cX$,
    \[
        D_K^{(2)}(x,q_1,a_1)
    \coloneqq
        \begin{cases}
        1, & a_1 = 0,\\
        1+\nicefrac{2M}{\e^2}, & a_1 \neq 0.
        \end{cases}
    \]
    For every $k \in \{3,\dots,K\}$, define, for any $x,q_1,a_1,\ldots,q_{k-1},a_{k-1} \in \cX$,
    \[
        D_K^{(k)}(x,q_1,a_1,\dots,q_{k-1},a_{k-1})
    \coloneqq
        q_{k-1}.
    \]
    Thus, after the second step, the chain rule simply repeats the previous question.
    
    We now compute the oracle's trajectory for the initial prompt $x=1$.
    Since
    \begin{align*}
    Q^{(1)}_{D_K,g}(1) & =0,
    \\
    A^{(1)}_{D_K,g}(1) &=g(0)=0,
    \end{align*}
    and the discontinuous branch in $D_K^{(2)}$ gives
    \begin{align*}
    Q^{(2)}_{D_K,g}(1) & =1,
    \\
    A^{(2)}_{D_K,g}(1) &=g(1)=0,
    \end{align*}
    then, for all later steps $k \in \{3, \dots, K\}$, since the chain rule repeats the previous question, we get
    \begin{align*}
    Q^{(k)}_{D_K,g}(1)& =1
    \\
    A^{(k)}_{D_K,g}(1)& =g(1)=0 .
    \end{align*}
    In particular, this yields
    \[
        A^{(K)}_{D_K,g}(1)=0=g(1),
    \]
    therefore $1$ is $(D_K,g)$-recoverable, and the support of $\nu$ is contained in the
    $(D_K,g)$-recoverable set.
    
    We now compute the learner's trajectory for the initial prompt $x=1$.
    Since
    \begin{align*}
    Q^{(1)}_{D_K,f}(1) &=0,
    \\
    A^{(1)}_{D_K,f}(1) &=f(0)=\varepsilon \neq 0,
    \end{align*}
    then, the second step of the chain rule follows the other branch:
    \begin{align*}
    Q^{(2)}_{D_K,f}(1) & =1+\nicefrac{2M}{\e^2},
    \\
    A^{(2)}_{D_K,f}(1) & =f(1+\nicefrac{2M}{\e^2})=\nicefrac{2M}{\e},
    \end{align*}
    and thus, for all later steps $k \in \{3, \dots, K\}$, since the chain rule repeats the previous question, we get
    \begin{align*}
    Q^{(k)}_{D_K,f}(1) & = 1+\nicefrac{2M}{\e^2}
    \\
    A^{(k)}_{D_K,f}(1) & =f(1+\nicefrac{2M}{\e^2})=\nicefrac{2M}{\e} .
    \end{align*}
    In particular, this yields
    \[
    A^{(K)}_{D_K,f}(1)=\nicefrac{2M}{\e}.
    \]
    Therefore,
    the reasoning risk is
    \[
        R^{D_K}_{\nu,\ell}(f,g)
        =
        \underset{X \sim \nu}{\mathbb{E}}
        \bbsb{
            \ell\Brb{
                A^{(K)}_{D_K,f}(X), \
                g(X)
            }
        }
    =
        \ell \brb{ \nicefrac{2M}{\e}, \ g(1) }
    =
        \min\left\{\frac{\e}{2}\labs{\nicefrac{2M}{\e}-0},M\right\}
    =
        M,
    \]
    the trajectory-mismatch risk is
    \[
        \underset{X \sim \nu}{\mathbb{E}}
        \bbsb{
            \ell\Brb{
                f \brb{ Q^{(K)}_{D_K,f}(X) },\ 
                f \brb{ Q^{(K)}_{D_K,g}(X) }
            }
        }
    =
        \ell \brb{ f(1+\nicefrac{2M}{\e^2}), \ f(1) }
    =
        \min\left\{\frac{\e}{2}\labs{\nicefrac{2M}{\e}-0},M\right\}
    =
        M,
    \]
    and the oracle-trajectory risk is
    \[
    \underset{X \sim \nu}{\mathbb{E}}
        \bbsb{
            \ell\Brb{
            f\brb{ Q^{(K)}_{D_K,g}(X) }, \
            g\brb{ Q^{(K)}_{D_K,g}(X) }
        }
    }
    =
        \ell \brb{ f(1),\ g(1) }
    =
        \min\left\{\frac{\e}{2}\labs{0-0},M\right\}
    =
        0 .
    \]
    This proves, in particular, that the right-hand side of the decomposition is $M+0=M$, which equals the
    reasoning risk, hence \eqref{eq:canonical-decomposition} holds with equality.
    \end{proof}

    \subsection{Proof for Theorem \ref{thm:stability-TMR}}
    \label[appendix]{appe:proofOfPositiveResult}
    
    Proving \Cref{thm:stability-TMR} requires the following combinatorial lemma, which gives a representation of the amplification factor as the solution of a maximization problem.

    \begin{lemma}[Max representation of the amplification factor]
    \label{lem:max-representation-amplification-factor}
    For any integer $K\ge 2$, any $\phi\ge 0$, and any $\delta\ge 0$, it holds that
    \[
        \alpha_K(\phi,\delta)
        =
        \max_{0\le m\le K-2}
        \delta^{K-2-m}
        \sum_{j=0}^{m}(\phi\delta)^j .
    \]
    (Here and below, every term with exponent $0$ is interpreted as $1$, including $0^0$.)
    \end{lemma}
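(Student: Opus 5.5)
The plan is to study the sequence $F(m)\coloneqq\delta^{K-2-m}S_m$, $m\in\{0,\dots,K-2\}$, where $S_m\coloneqq\sum_{j=0}^m(\phi\delta)^j$. I will show that $F$ is unimodal, locate the index $m^\ast$ of its maximum, and check that $F(m^\ast)$ is the case-wise expression defining $\alpha_K(\phi,\delta)$ in \Cref{def:amplification-factor}.

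The key identity is $S_{m+1}=1+\phi\delta\,S_m$. For $m\le K-3$ it gives
\[
F(m+1)-F(m)=\delta^{K-3-m}\bigl(S_{m+1}-\delta S_m\bigr)=\delta^{K-3-m}\bigl(1-\delta(1-\phi)S_m\bigr).
\]
When $\delta>0$, the sign of the increment is therefore the sign of $1-\delta(1-\phi)S_m$. The case $\delta=0$ is handled directly with the convention $0^0=1$: every $F(m)$ with $m<K-2$ vanishes, while $F(K-2)=S_{K-2}=1$, which agrees with the first case of the definition.

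\emph{Case $\phi\ge1$ or $\delta\le1$.} I claim every increment is nonnegative, so the maximum is at $m=K-2$.
\begin{itemize}
\item If $\phi\ge1$, then $\delta(1-\phi)\le0$ and the claim is immediate.
\item If $\phi<1$ and $\delta\le1$, then $\phi\delta<1$, so $S_m<1/(1-\phi\delta)$. Moreover $\delta(1-\phi)=\delta-\phi\delta\le1-\phi\delta$, hence $\delta(1-\phi)S_m<1$.
\end{itemize}
The maximum is then $F(K-2)=S_{K-2}$, which equals $\frac{1-(\phi\delta)^{K-1}}{1-\phi\delta}$ or $K-1$. These are the first two cases of the definition.

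\emph{Case $\phi<1<\delta$.} Set $c\coloneqq1/(\delta(1-\phi))$. Because $S_m$ is nondecreasing in $m$, the increments are nonnegative exactly while $S_m\le c$, and negative afterwards, so $F$ is unimodal. Its maximizer is
\[
m^\ast=\min\bigl(K-2,\ \max\{m\ge0:\ m=0\text{ or }S_{m-1}\le c\}\bigr);
\]
ties do not affect the maximal value. It remains to solve $S_{m-1}\le c$ in closed form.
\begin{itemize}
\item If $\phi=0$, then $S_m=1>c$ (since $\delta>1$), so $m^\ast=0$.
\item If $\phi\delta=1$, then $S_{m-1}=m$ and $c=1/(\delta-1)$, which gives $m^\ast=n_K(\phi,\delta)$.
\item If $\phi>0$ and $\phi\delta\neq1$, the geometric formula rewrites $S_{m-1}\le c$ as a condition on $(\phi\delta)^m$ against the quantity
\[
1-\frac{1-\phi\delta}{\delta(1-\phi)}=\frac{\delta-1}{\delta(1-\phi)}.
\]
\end{itemize}
In the last subcase the main obstacle is orientation bookkeeping. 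When $\phi\delta<1$, the condition reads $(\phi\delta)^m\ge\frac{\delta-1}{\delta(1-\phi)}$; this argument lies in $(0,1)$ because $\delta-1<\delta-\phi\delta$, and since the base is below $1$ this is equivalent to $m\le\log_{\phi\delta}\frac{\delta-1}{\delta(1-\phi)}$. When $\phi\delta>1$, the inequality flips to $(\phi\delta)^m\le\frac{\delta-1}{\delta(1-\phi)}$, with argument greater than $1$ and base greater than $1$, which is again equivalent to $m\le\log_{\phi\delta}(\cdot)$. Taking floors gives $m^\ast=m_K(\phi,\delta)$ in both subcases. Substituting into $F(m^\ast)=\delta^{K-2-m^\ast}S_{m^\ast}$ yields the third and fourth cases of the definition.
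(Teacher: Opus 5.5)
Your proof is correct and follows essentially the same route as the paper. Both compute the consecutive difference $\delta^{K-3-m}(S_{m+1}-\delta S_m)$; your form $1-\delta(1-\phi)S_m$ is the paper's $(1-\delta)S_m+(\phi\delta)^{m+1}$ rewritten. Both then show monotonicity when $\phi\ge1$ or $\delta\le1$, and otherwise locate the peak of the unimodal sequence by the same logarithm-and-floor computation.

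One small slip, which does not affect the result: when $\phi=0$ you have $S_m=1/(1-\phi\delta)$ exactly, so the strict inequalities in your second bullet should be non-strict. For example, $\phi=0$ and $\delta=1$ gives a zero increment, and nonnegativity is all your argument needs.
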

    
    \begin{proof}
    For every $m\in\{0,\ldots,K-2\}$, define
    \[
        b_m
        \coloneqq
        \delta^{K-2-m}
        \sum_{j=0}^{m}(\phi\delta)^j .
    \]
    Let
    \[
        B
        \coloneqq
        \max_{0\le m\le K-2} b_m .
    \]
    We show that $B=\alpha_K(\phi,\delta)$.
    
    If $\delta=0$, then $\delta\le 1$ and $\phi\delta\neq 1$, so, by \Cref{def:amplification-factor},
    \[
        \alpha_K(\phi,0)
        =
        \frac{1-0^{K-1}}{1-0}
        =
        1.
    \]
    On the other hand, the maximum defining $B$ is attained at $m=K-2$, and equals $1$.
    Thus the claim holds when $\delta=0$.
    Hence, in the rest of the proof, assume $\delta>0$.
    
    For every $m\in\{0,\ldots,K-3\}$,
    \[
    \begin{aligned}
        b_{m+1}-b_m
        &=
        \delta^{K-3-m}
        \sum_{j=0}^{m+1}(\phi\delta)^j
        -
        \delta^{K-2-m}
        \sum_{j=0}^{m}(\phi\delta)^j                                      \\
        &=
        \delta^{K-3-m}
        \Brb{
            (1-\delta)\sum_{j=0}^{m}(\phi\delta)^j
            +
            (\phi\delta)^{m+1}
        } .
    \end{aligned}
    \]
    
    First suppose that $\delta\le 1$.
    Then $1-\delta\ge 0$, and therefore $b_{m+1}-b_m\ge 0$ for every $m$.
    Thus the maximum is attained at $m=K-2$.
    The same conclusion holds if $\phi\ge 1$.
    Indeed, when $\delta\le 1$ this was already proved, while if $\delta>1$ and $\phi\ge 1$, then $\phi\delta>1$, and
    \[
    \begin{aligned}
        (1-\delta)\sum_{j=0}^{m}(\phi\delta)^j
        +
        (\phi\delta)^{m+1}
        &=
        (1-\delta)\frac{(\phi\delta)^{m+1}-1}{\phi\delta-1}
        +
        (\phi\delta)^{m+1}                                                \\
        &=
        \frac{
            \delta(\phi-1)(\phi\delta)^{m+1}
            +
            \delta-1
        }{\phi\delta-1}
        \ge 0 .
    \end{aligned}
    \]
    Therefore, if $\phi\ge 1$ or $\delta\le 1$, then
    \[
        B
        =
        b_{K-2}
        =
        \sum_{j=0}^{K-2}(\phi\delta)^j .
    \]
    If $\phi\delta\neq 1$, this gives
    \[
        B
        =
        \frac{1-(\phi\delta)^{K-1}}{1-\phi\delta}
        =
        \alpha_K(\phi,\delta).
    \]
    If $\phi\delta=1$, this gives
    \[
        B
        =
        K-1
        =
        \alpha_K(\phi,\delta).
    \]
    
    It remains to consider the case $\phi<1$ and $\delta>1$.
    First suppose that $\phi=0$.
    Then, for every $m\in\{0,\ldots,K-2\}$,
    \[
        b_m
        =
        \delta^{K-2-m},
    \]
    because $\sum_{j=0}^{m}(\phi\delta)^j=1$.
    Since $\delta>1$, the maximum is attained at $m=0$, and therefore
    \[
        B
        =
        \delta^{K-2}.
    \]
    Since $m_K(0,\delta)=0$, this is exactly
    \[
        B
        =
        \delta^{K-2-m_K(0,\delta)}
        \frac{1-(\phi\delta)^{m_K(0,\delta)+1}}{1-\phi\delta}
        =
        \alpha_K(0,\delta).
    \]
    
    Now suppose that $\phi\in(0,1)$ and $\delta>1$.
    Set
    \[
        r\coloneqq \phi\delta .
    \]
    Assume first that $r\neq 1$.
    Using the geometric-sum formula, for every $m\in\{0,\ldots,K-3\}$,
    \[
    \begin{aligned}
        b_{m+1}-b_m
        &=
        \delta^{K-3-m}
        \Brb{
            (1-\delta)\sum_{j=0}^{m}r^j
            +
            r^{m+1}
        } .
    \end{aligned}
    \]
    If $r<1$, then
    \[
    \begin{aligned}
        (1-\delta)\sum_{j=0}^{m}r^j+r^{m+1}
        &=
        (1-\delta)\frac{1-r^{m+1}}{1-r}+r^{m+1}                            \\
        &=
        \frac{
            1-\delta
            +
            \delta(1-\phi)r^{m+1}
        }{1-r}.
    \end{aligned}
    \]
    Since $1-r>0$, we have $b_{m+1}\ge b_m$ if and only if
    \[
        r^{m+1}
        \ge
        \frac{\delta-1}{\delta(1-\phi)}.
    \]
    If $r>1$, then
    \[
    \begin{aligned}
        (1-\delta)\sum_{j=0}^{m}r^j+r^{m+1}
        &=
        (1-\delta)\frac{r^{m+1}-1}{r-1}+r^{m+1}                              \\
        &=
        \frac{
            \delta-1
            -
            \delta(1-\phi)r^{m+1}
        }{r-1}.
    \end{aligned}
    \]
    Since $r-1>0$, we again get $b_{m+1}\ge b_m$ if and only if
    \[
        r^{m+1}
        \le
        \frac{\delta-1}{\delta(1-\phi)}.
    \]
    Moreover,
    \[
        r<1
        \quad\Longleftrightarrow\quad
        \frac{\delta-1}{\delta(1-\phi)}<1,
    \]
    and
    \[
        r>1
        \quad\Longleftrightarrow\quad
        \frac{\delta-1}{\delta(1-\phi)}>1.
    \]
    Thus, in both cases, the sequence $(b_m)$ is increasing up to
    \[
        m_K(\phi,\delta)
        =
        \min\bbrb{
            K-2,
            \bbfl{
            \log_{\phi\delta}\Brb{\frac{\delta-1}{\delta(1-\phi)}}
            }
        }
    \]
    and decreasing afterwards.
    Hence the maximum is attained at $m_K(\phi,\delta)$.
    Since $r=\phi\delta\neq 1$, the geometric-sum formula gives
    \[
    \begin{aligned}
        B
        &=
        b_{m_K(\phi,\delta)}                                                \\
        &=
        \delta^{K-2-m_K(\phi,\delta)}
        \frac{1-(\phi\delta)^{m_K(\phi,\delta)+1}}{1-\phi\delta}
        =
        \alpha_K(\phi,\delta).
    \end{aligned}
    \]
    
    Finally, suppose that $\phi<1$, $\delta>1$, and $\phi\delta=1$.
    Then necessarily $\phi>0$, and
    \[
        b_m
        =
        \delta^{K-2-m}(m+1).
    \]
    Moreover,
    \[
    \begin{aligned}
        b_{m+1}-b_m
        &=
        \delta^{K-3-m}(m+2)-\delta^{K-2-m}(m+1)                                 \\
        &=
        \delta^{K-3-m}
        \Brb{
            1-(\delta-1)(m+1)
        } .
    \end{aligned}
    \]
    Therefore $b_{m+1}\ge b_m$ if and only if
    \[
        m+1\le \frac{1}{\delta-1}.
    \]
    Hence the maximum is attained at
    \[
        n_K(\phi,\delta)
        =
        \min\bbrb{
            K-2,\
            \bbfl{ \frac{1}{\delta-1} }
        },
    \]
    and
    \[
    \begin{aligned}
        B
        &=
        b_{n_K(\phi,\delta)}                                                \\
        &=
        \delta^{K-2-n_K(\phi,\delta)}
        \brb{n_K(\phi,\delta)+1}
        =
        \alpha_K(\phi,\delta).
    \end{aligned}
    \]
    This proves the claim.
    \end{proof}

    We can now prove \Cref{thm:stability-TMR}.

    \begin{proof}[Proof of Theorem \ref{thm:stability-TMR}]
    Write
    \[
        d:=d(f,g)=\sup_{z\in\mathcal X}\rho(f(z),g(z)).
    \]
    If $d=0$, then $f=g$ pointwise.
    Hence the trajectories generated by $f$ and $g$ coincide at every step.
    Since $\ell$ is a quasimetric, it is zero on the diagonal, and therefore the TMR term is equal to $0$.
    Thus the desired inequality is immediate.
    Assume from now on that $d>0$.
    
    Fix $x\in\mathcal X$.
    For $k=1,\ldots,K$, define
    \[
        \Delta_k(x)
        :=
        \rho\!\left(
            Q^{(k)}_{D_K,f}(x),
            Q^{(k)}_{D_K,g}(x)
        \right),
    \]
    and
    \[
        \Gamma_k(x)
        :=
        \rho\!\left(
            A^{(k)}_{D_K,f}(x),
            A^{(k)}_{D_K,g}(x)
        \right).
    \]
    Since
    \[
        Q^{(1)}_{D_K,f}=Q^{(1)}_{D_K,g}=D_K^{(1)},
    \]
    we have
    \[
        \Delta_1(x)=0.
    \]
    
    For every $i=1,\ldots,K$, by the triangle inequality for $\rho$, the
    $\phi$-stability of $f$, and the definition of $d$, we have
    \[
    \begin{aligned}
        \Gamma_i(x)
        &=
        \rho\!\left(
            f(Q^{(i)}_{D_K,f}(x)),
            g(Q^{(i)}_{D_K,g}(x))
        \right)                                                     \\
        &\le
        \rho\!\left(
            f(Q^{(i)}_{D_K,f}(x)),
            f(Q^{(i)}_{D_K,g}(x))
        \right)
        +
        \rho\!\left(
            f(Q^{(i)}_{D_K,g}(x)),
            g(Q^{(i)}_{D_K,g}(x))
        \right)                                                     \\
        &\le
        \phi\,\Delta_i(x)+d .
    \end{aligned}
    \]
    Now fix $k\in\{2,\ldots,K\}$.
    Since $D_K^{(k)}$ is $\delta$-stable, there exist stability constants
    \[
        \delta_0^{(k)},\delta_1^{(k)},\ldots,\delta_{2k-2}^{(k)}\ge 0
    \]
    such that
    \[
        \sum_{r=0}^{2k-2}\delta_r^{(k)}\le \delta.
    \]
    Applying the stability condition to the two inputs
    \[
        \Brb{
            x,
            Q^{(1)}_{D_K,f}(x),
            A^{(1)}_{D_K,f}(x),
            \ldots,
            Q^{(k-1)}_{D_K,f}(x),
            A^{(k-1)}_{D_K,f}(x)
        }
    \]
    and
    \[
        \Brb{
            x,
            Q^{(1)}_{D_K,g}(x),
            A^{(1)}_{D_K,g}(x),
            \ldots,
            Q^{(k-1)}_{D_K,g}(x),
            A^{(k-1)}_{D_K,g}(x)
        },
    \]
    we get
    \[
    \begin{aligned}
        &\Delta_k(x)
    \\
        &=
        \rho\Brb{
            Q^{(k)}_{D_K,f}(x),
            Q^{(k)}_{D_K,g}(x)
        }                                                          \\
        &=
        \rho\Brb{
            D_K^{(k)}\Brb{
                x,
                Q^{(1)}_{D_K,f}(x),
                A^{(1)}_{D_K,f}(x),
                \ldots,
                Q^{(k-1)}_{D_K,f}(x),
                A^{(k-1)}_{D_K,f}(x)
            },
            D_K^{(k)}\Brb{
                x,
                Q^{(1)}_{D_K,g}(x),
                A^{(1)}_{D_K,g}(x),
                \ldots,
                Q^{(k-1)}_{D_K,g}(x),
                A^{(k-1)}_{D_K,g}(x)
            }
        }                                                          \\
        &\le
        \sum_{i=1}^{k-1}
            \delta_{2i-1}^{(k)}
            \rho\Brb{
                Q^{(i)}_{D_K,f}(x),
                Q^{(i)}_{D_K,g}(x)
            }
        +
        \sum_{i=1}^{k-1}
            \delta_{2i}^{(k)}
            \rho\Brb{
                A^{(i)}_{D_K,f}(x),
                A^{(i)}_{D_K,g}(x)
            }                                                       \\
        &=
        \sum_{i=1}^{k-1}
            \delta_{2i-1}^{(k)}\Delta_i(x)
        +
        \sum_{i=1}^{k-1}
            \delta_{2i}^{(k)}\Gamma_i(x)                            \\
        &\le
        \Brb{
            \sum_{i=1}^{k-1}
            \delta_{2i-1}^{(k)}
            +
            \sum_{i=1}^{k-1}
            \delta_{2i}^{(k)}
        }
        \max_{1\le i\le k-1}
        \max\{\Delta_i(x),\Gamma_i(x)\}                              \\
        &\le
        \delta\max_{1\le i\le k-1}
        \max\{\Delta_i(x),\Gamma_i(x)\}.
    \end{aligned}
    \]
    Together with $\Gamma_i(x)\le \phi\Delta_i(x)+d$, this gives
    \[
        \Delta_k(x)
        \le
        \delta\max_{1\le i\le k-1}
        \max\{\Delta_i(x),\phi\Delta_i(x)+d\}.
    \]
    Define
    \[
        e_k(x):=\frac{\Delta_k(x)}{d}.
    \]
    Then
    \[
        e_1(x)=0,
    \]
    and, for every $k=2,\ldots,K$,
    \[
        e_k(x)
        \le
        \delta\max_{1\le i\le k-1}
        \max\{e_i(x),\phi e_i(x)+1\}.
    \]
    
    Introduce the two increasing affine maps
    \[
        T_{\mathsf Q}(z):=\delta z,
        \qquad
        T_{\mathsf A}(z):=\delta(\phi z+1).
    \]
    For $L\ge 0$, let $\mathcal W_L$ be the set of all words of length at most
    $L$ over the alphabet $\{\mathsf Q,\mathsf A\}$.
    For a word $w=(w_1,\ldots,w_m)$, write
    \[
        T_w:=T_{w_m}\circ\cdots\circ T_{w_1},
    \]
    and set $T_{\varnothing}(0)=0$.
    Define
    \[
        C_L:=\max_{w\in\mathcal W_L}T_w(0).
    \]
    We claim that
    \[
        e_k(x)\le C_{k-1}
        \qquad
        \text{for every } k=1,\ldots,K.
    \]
    The claim is true for $k=1$, since $e_1(x)=0=C_0$.
    Suppose the claim holds for all indices smaller than $k$.
    Then, for every $i\in\{1,\ldots,k-1\}$, the induction hypothesis gives
    \[
        e_i(x)\le C_{i-1}.
    \]
    Since $i-1\le k-2$ and $C_L$ is nondecreasing in $L$, we also have
    \[
        C_{i-1}\le C_{k-2}.
    \]
    Therefore,
    \[
        e_i(x)\le C_{k-2}
        \qquad
        \text{for every } i\in\{1,\ldots,k-1\}.
    \]
    Since $T_{\mathsf Q}$ and $T_{\mathsf A}$ are increasing, this implies
    \[
        T_{\mathsf Q}(e_i(x))\le T_{\mathsf Q}(C_{k-2}),
        \qquad
        T_{\mathsf A}(e_i(x))\le T_{\mathsf A}(C_{k-2})
    \]
    for every $i\in\{1,\ldots,k-1\}$.
    Recalling that
    \[
        T_{\mathsf Q}(z)=\delta z,
        \qquad
        T_{\mathsf A}(z)=\delta(\phi z+1),
    \]
    we get
    \[
    \begin{aligned}
        e_k(x)
        &\le
        \delta\max_{1\le i\le k-1}
        \max\{e_i(x),\phi e_i(x)+1\}                                  \\
        &=
        \max_{1\le i\le k-1}
        \max\{T_{\mathsf Q}(e_i(x)),T_{\mathsf A}(e_i(x))\}             \\
        &\le
        \max\{T_{\mathsf Q}(C_{k-2}),T_{\mathsf A}(C_{k-2})\}.
    \end{aligned}
    \]
    It remains to prove that
    \[
        \max\{T_{\mathsf Q}(C_{k-2}),T_{\mathsf A}(C_{k-2})\}\le C_{k-1}.
    \]
    Indeed, since $\mathcal W_{k-2}$ is finite, there exists a word
    $w^\star=(w_1,\ldots,w_m)\in\mathcal W_{k-2}$ such that
    \[
        C_{k-2}=T_{w^\star}(0).
    \]
    Since $w^\star\in\mathcal W_{k-2}$, its length satisfies $m\le k-2$.
    Therefore, the two words
    \[
        w_{\mathsf Q}^\star \coloneqq (w_1,\ldots,w_m,\mathsf Q),
        \qquad
        w_{\mathsf A}^\star \coloneqq (w_1,\ldots,w_m,\mathsf A)
    \]
    have length at most $k-1$, and hence both belong to $\mathcal W_{k-1}$.
    By the definition of $T_w$, we have
    \[
        T_{w_{\mathsf Q}^\star}
        =
        T_{\mathsf Q}\circ T_{w^\star},
        \qquad
        T_{w_{\mathsf A}^\star}
        =
        T_{\mathsf A}\circ T_{w^\star}.
    \]
    Evaluating at $0$ gives
    \[
        T_{w_{\mathsf Q}^\star}(0)
        =
        T_{\mathsf Q}\brb{T_{w^\star}(0)}
        =
        T_{\mathsf Q}(C_{k-2}),
    \]
    and similarly
    \[
        T_{w_{\mathsf A}^\star}(0)
        =
        T_{\mathsf A}\brb{T_{w^\star}(0)}
        =
        T_{\mathsf A}(C_{k-2}).
    \]
    Since $C_{k-1}$ is the maximum of $T_w(0)$ over all $w\in\mathcal W_{k-1}$, and since
    $w_{\mathsf Q}^\star,w_{\mathsf A}^\star\in\mathcal W_{k-1}$, we conclude that
    \[
        T_{\mathsf Q}(C_{k-2})\le C_{k-1},
        \qquad
        T_{\mathsf A}(C_{k-2})\le C_{k-1}.
    \]
    Therefore,
    \[
        \max\{T_{\mathsf Q}(C_{k-2}),T_{\mathsf A}(C_{k-2})\}\le C_{k-1}.
    \]
    Hence
    \[
        e_k(x)\le C_{k-1}.
    \]
    Thus, in particular,
    \[
        \Delta_K(x)\le d\,C_{K-1}.
    \]
    
    It remains to compute $C_{K-1}$.
    Let $L:=K-1$.
    First suppose $\delta\ge 1$.
    For every $z\ge 0$, we have
    \[
    \begin{aligned}
        (T_{\mathsf Q}\circ T_{\mathsf A})(z)-(T_{\mathsf A}\circ T_{\mathsf Q})(z)
        &=
        T_{\mathsf Q}\brb{\delta(\phi z+1)}-T_{\mathsf A}(\delta z)                                  \\
        &=
        \delta^2(\phi z+1)-\delta(\phi \delta z+1)                                     \\
        &=
        \delta^2\phi z+\delta^2-\delta^2\phi z-\delta                                      \\
        &=
        \delta(\delta-1)
        \ge 0.
    \end{aligned}
    \]
    Thus, applying $T_{\mathsf A}$ first and then $T_{\mathsf Q}$ gives a value at least as large as applying
    $T_{\mathsf Q}$ first and then $T_{\mathsf A}$.
    
    We now use this as an adjacent-swap argument.
    Take any word $w$ with a fixed number of $\mathsf A$'s and $\mathsf Q$'s.
    If $w$ contains an adjacent pair $(\mathsf Q,\mathsf A)$, write
    \[
        w=(u,\mathsf Q,\mathsf A,v),
    \]
    where $u$ is the part of the word before this adjacent pair and $v$ is the part after it.
    Let
    \[
        \widetilde w=(u,\mathsf A,\mathsf Q,v)
    \]
    be the word obtained by swapping this adjacent pair.
    By the definition of $T_w$, we have
    \[
        T_w
        =
        T_v\circ T_{\mathsf A}\circ T_{\mathsf Q}\circ T_u,
        \qquad
        T_{\widetilde w}
        =
        T_v\circ T_{\mathsf Q}\circ T_{\mathsf A}\circ T_u.
    \]
    Set
    \[
        z\coloneqq T_u(0).
    \]
    Since $T_{\mathsf Q}$ and $T_{\mathsf A}$ map nonnegative numbers to nonnegative numbers, we have $z\ge 0$.
    Therefore,
    \[
        (T_{\mathsf Q}\circ T_{\mathsf A})(z)\ge (T_{\mathsf A}\circ T_{\mathsf Q})(z).
    \]
    Moreover, $T_v$ is increasing, being a composition of increasing maps.
    Hence
    \[
        T_{\widetilde w}(0)
        =
        T_v\Brb{(T_{\mathsf Q}\circ T_{\mathsf A})(z)}
        \ge
        T_v\Brb{(T_{\mathsf A}\circ T_{\mathsf Q})(z)}
        =
        T_w(0).
    \]
    Thus, whenever a word contains an adjacent pair $(\mathsf Q,\mathsf A)$, swapping it to $(\mathsf A,\mathsf Q)$ cannot
    decrease its value at $0$.
    
    Repeating this adjacent swap finitely many times moves all occurrences of $\mathsf A$ before all
    occurrences of $\mathsf Q$.
    Consequently, among words with a fixed number of $\mathsf A$'s and $\mathsf Q$'s, the value at $0$ is
    maximized by a word of the form
    \[
        (\mathsf A,\ldots,\mathsf A,\mathsf Q,\ldots,\mathsf Q),
    \]
    that is, by applying all $T_{\mathsf A}$'s first and all $T_{\mathsf Q}$'s afterwards.
    Moreover, since $\delta\ge 1$, any shorter word can be extended by appending $\mathsf Q$'s without
    decreasing its value.
    Therefore
    \[
        C_L
        =
        \max_{1\le n\le L}
        T_{\mathsf Q}^{L-n}T_{\mathsf A}^n(0).
    \]
    We now compute $T_{\mathsf A}^n(0)$.
    Since
    \[
        T_{\mathsf A}(z)=\delta(\phi z+1)=\delta\phi z+\delta,
    \]
    we claim that, for every $n\ge 1$,
    \[
        T_{\mathsf A}^n(0)
        =
        \delta\sum_{j=0}^{n-1}(\delta\phi)^j .
    \]
    For $n=1$, this gives
    \[
        T_{\mathsf A}(0)=\delta
        =
        \delta\sum_{j=0}^{0}(\delta\phi)^j .
    \]
    Assume now that the identity holds for some $n\ge 1$.
    Then
    \[
    \begin{aligned}
        T_{\mathsf A}^{n+1}(0)
        &=
        T_{\mathsf A}\brb{T_{\mathsf A}^n(0)}                                             \\
        &=
        \delta\phi T_{\mathsf A}^n(0)+\delta                                               \\
        &=
        \delta\phi
        \brb{
            \delta\sum_{j=0}^{n-1}(\delta\phi)^j
        }
        +\delta                                                            \\
        &=
        \delta\sum_{j=1}^{n}(\delta\phi)^j+\delta                                    \\
        &=
        \delta\sum_{j=0}^{n}(\delta\phi)^j .
    \end{aligned}
    \]
    Thus, by induction,
    \[
        T_{\mathsf A}^n(0)
        =
        \delta\sum_{j=0}^{n-1}(\delta\phi)^j .
    \]
    Consequently, we get
    \[
        C_L
        =
        \max_{1\le n\le L}
        \delta^{L-n+1}
        \sum_{j=0}^{n-1}(\delta\phi)^j.
    \]
    Equivalently, with $m=n-1$,
    \[
        C_{K-1}
        =
        \max_{0\le m\le K-2}
        \delta^{K-1-m}
        \sum_{j=0}^{m}(\delta\phi)^j.
    \]
    
    Now suppose $0\le \delta<1$.
    Then
    \[
        (T_{\mathsf Q}\circ T_{\mathsf A})(z)-(T_{\mathsf A}\circ T_{\mathsf Q})(z)
        =
        \delta(\delta-1)
        \le 0.
    \]
    Thus, another adjacent-swap argument gives that moving all $\mathsf Q$'s before all $\mathsf A$'s can only increase the value at $0$.
    Since $T_{\mathsf Q}(0)=0$, any initial block of $\mathsf Q$'s has no effect.
    Therefore, for a word containing $n$ occurrences of $\mathsf A$,
    \[
        T_w(0)\le T_{\mathsf A}^n(0)
        =
        \delta\sum_{j=0}^{n-1}(\delta\phi)^j.
    \]
    This quantity is nondecreasing in $n$, so the maximum over words of length
    at most $L$ is attained by $T_{\mathsf A}^L$.
    Hence
    \[
        C_L
        =
        T_{\mathsf A}^L(0)
        =
        \delta\sum_{j=0}^{L-1}(\delta\phi)^j.
    \]
    This is precisely the value of
    \[
        \max_{0\le m\le K-2}
        \delta^{K-1-m}
        \sum_{j=0}^{m}(\delta\phi)^j
    \]
    at $m=K-2$.
    Therefore, in all cases,
    \[
        C_{K-1}
        =
        \max_{0\le m\le K-2}
        \delta^{K-1-m}
        \sum_{j=0}^{m}(\delta\phi)^j.
    \]
    By \Cref{lem:max-representation-amplification-factor},
    \[
        C_{K-1}
        =
        \delta\,\alpha_K(\phi,\delta).
    \]
    Consequently, for every $x\in\mathcal X$,
    \[
        \Delta_K(x)
        \le
        \delta\,d(f,g)\,\alpha_K(\phi,\delta).
    \]
    
    We now bound the trajectory-mismatch risk.
    Because $\ell$ is $\lambda$-stable and is zero on the diagonal, there exist
    $\lambda_1,\lambda_2\ge 0$ such that $\lambda_1+\lambda_2\le\lambda$ and
    \[
        \labs{\ell(u,v)-\ell(u',v')}
        \le
        \lambda_1\rho(u,u')+\lambda_2\rho(v,v')
    \]
    for all $u,u',v,v'\in\cX$.
    Thus, for all $u,v\in\cX$,
    \[
        \ell(u,v)
        =
        \labs{\ell(u,v)-\ell(v,v)}
        \le
        \lambda_1\rho(u,v),
    \]
    and also
    \[
        \ell(u,v)
        =
        \labs{\ell(u,v)-\ell(u,u)}
        \le
        \lambda_2\rho(u,v).
    \]
    Hence
    \[
        \ell(u,v)
        \le
        \min\{\lambda_1,\lambda_2\}\rho(u,v)
        \le
        \frac{\lambda}{2}\rho(u,v).
    \]
    Therefore, for every $x\in\cX$,
    \[
    \begin{aligned}
        \ell\Brb{
            f\brb{Q_{D_K,f}^{(K)}(x)},
            f\brb{Q_{D_K,g}^{(K)}(x)}
        }
        &\le
        \frac{\lambda}{2}
        \rho\Brb{
            f\brb{Q_{D_K,f}^{(K)}(x)},
            f\brb{Q_{D_K,g}^{(K)}(x)}
        }                                                       \\
        &\le
        \frac{\lambda\phi}{2}
        \rho\Brb{
            Q_{D_K,f}^{(K)}(x),
            Q_{D_K,g}^{(K)}(x)
        }                                                       \\
        &=
        \frac{\lambda\phi}{2}\Delta_K(x)                         \\
        &\le
        \frac{\lambda\phi\delta}{2}
        d(f,g)\alpha_K(\phi,\delta).
    \end{aligned}
    \]
    Taking expectation with respect to $X\sim\nu$ yields
    \[
        \underset{X \sim \nu}{\E}
            \bbsb{
                \ell\Brb{
                    f\brb{ Q_{D_K,f}^{(K)}(X) }, \,
                    f\brb{ Q_{D_K,g}^{(K)}(X) }
                }
            }
        \le
        \frac{\lambda\phi\delta}{2}\,
        \alpha_K(\phi,\delta)\,
        d(f,g).
    \]
    This proves the theorem.
    \end{proof}

\subsection{Proof for Theorem \ref{thm:stability-reasoning-risk-tight}}
\label[appendix]{appe:proofOfOptimalityOfPositiveResult}

\begin{proof}[Proof of Theorem \ref{thm:stability-reasoning-risk-tight}]
Let
\[
    m_\star
    \in
    \argmax_{0\le m\le K-2}
    \delta^{K-2-m}
    \sum_{j=0}^{m}(\phi\delta)^j ,
\]
where powers with exponent $0$ (including $0^0$) are interpreted as equal to $1$.
Consider the word $w=(w_1,\ldots,w_{K-1})\in\{\mathsf A,\mathsf Q\}^{K-1}$ defined by
\[
    w_r
    \coloneqq
    \begin{cases}
    \mathsf A, & \text{if } r\le m_\star+1,\\
    \mathsf Q, & \text{if } r>m_\star+1.
    \end{cases}
\]
Let
\[
    T_{\mathsf Q}(z)\coloneqq \delta z,
    \qquad
    T_{\mathsf A}(z)\coloneqq \delta(\phi z+1),
\]
and define the sequence $(E_r)_{r=0}^{K-1}$ by
\[
    E_0\coloneqq 0,
    \qquad
    E_r\coloneqq T_{w_r}(E_{r-1})
    \quad
    \text{for every } r\in[K-1].
\]
By construction,
\[
\begin{aligned}
    E_{K-1}
    &=
    \delta^{K-1-m_\star}
    \sum_{j=0}^{m_\star}(\phi\delta)^j                                      \\
    &=
    \delta
    \max_{0\le m\le K-2}
    \delta^{K-2-m}
    \sum_{j=0}^{m}(\phi\delta)^j                                             \\
    &=
    \delta\alpha_K(\phi,\delta),
\end{aligned}
\]
where the last equality follows from \Cref{lem:max-representation-amplification-factor}.

Let $\cX\coloneqq\R$, equipped with the usual metric
\[
    \rho(x,y)\coloneqq \labs{x-y}.
\]
Choose distinct real numbers $s_1,\ldots,s_K$.
Define
\[
    f(z)\coloneqq \phi z
    \qquad
    \text{for all } z\in\R.
\]
Define $g\colon\R\to\R$ by
\[
    g(s_i)\coloneqq f(s_i)-1
    \quad
    \text{for every } i\in\{1,\ldots,K-1\},
\]
\[
    g(s_K)\coloneqq f(s_K),
\]
and
\[
    g(z)\coloneqq f(z)
    \quad
    \text{for every } z\notin\{s_1,\ldots,s_K\}.
\]
Then
\[
    d(f,g)
    =
    \sup_{z\in\R}\rho\brb{f(z),g(z)}
    =
    1.
\]
Moreover, $f$ is $\phi$-stable.

Define the loss function $\ell\colon\R\times\R\to[0,\infty)$ by
\[
    \ell(u,v)\coloneqq \frac{\lambda}{2}\labs{u-v}.
\]
Since $\ell$ is a positive multiple of the metric $\rho$, it is a quasimetric.
Moreover, for any $u,u',v,v'\in\R$,
\[
\begin{aligned}
    \labs{\ell(u,v)-\ell(u',v')}
    &\le
    \labs{\ell(u,v)-\ell(u',v)}
    +
    \labs{\ell(u',v)-\ell(u',v')}       \\
    &\le
    \frac{\lambda}{2}\rho(u,u')
    +
    \frac{\lambda}{2}\rho(v,v').
\end{aligned}
\]
Thus $\ell$ is $\lambda$-stable, with coordinate-wise Lipschitz constants
$\lambda/2$ and $\lambda/2$.

Let the testing distribution be the Dirac measure at $s_K$,
\[
    \nu\coloneqq \delta_{s_K}.
\]
We now define the chain rule.
Set
\[
    D_K^{(1)}(x)\coloneqq s_1
    \qquad
    \text{for all } x\in\R.
\]
For every $k\in\{2,\ldots,K\}$, write $r=k-1$.
If $w_r=\mathsf Q$, define
\[
    D_K^{(k)}(x,q_1,a_1,\ldots,q_{k-1},a_{k-1})
    \coloneqq
    s_k-\delta s_{k-1}+\delta q_{k-1}.
\]
If $w_r=\mathsf A$, define
\[
    D_K^{(k)}(x,q_1,a_1,\ldots,q_{k-1},a_{k-1})
    \coloneqq
    s_k-\delta g(s_{k-1})+\delta a_{k-1}.
\]
Each step of $D_K$ is $\delta$-stable.
Indeed, $D_K^{(1)}$ is constant.
For $k\ge 2$, the map $D_K^{(k)}$ depends on exactly one previous coordinate, either $q_{k-1}$ or $a_{k-1}$, with Lipschitz constant $\delta$, and does not depend on any other coordinate.

We now compute the oracle trajectory.
Since $D_K^{(1)}(s_K)=s_1$, we have
\[
    Q^{(1)}_{D_K,g}(s_K)=s_1.
\]
We claim that, for every $k\in[K]$,
\[
    Q^{(k)}_{D_K,g}(s_K)=s_k.
\]
This is true for $k=1$.
Suppose it holds up to step $k-1$.
If $w_{k-1}=\mathsf Q$, then
\[
    Q^{(k)}_{D_K,g}(s_K)
    =
    s_k-\delta s_{k-1}+\delta Q^{(k-1)}_{D_K,g}(s_K)
    =
    s_k.
\]
If $w_{k-1}=\mathsf A$, then
\[
    Q^{(k)}_{D_K,g}(s_K)
    =
    s_k-\delta g(s_{k-1})+\delta A^{(k-1)}_{D_K,g}(s_K)
    =
    s_k,
\]
because
\[
    A^{(k-1)}_{D_K,g}(s_K)
    =
    g\brb{Q^{(k-1)}_{D_K,g}(s_K)}
    =
    g(s_{k-1}).
\]
Thus the claim holds for all $k$.
In particular,
\[
    Q^{(K)}_{D_K,g}(s_K)=s_K.
\]
Since
\[
    A^{(K)}_{D_K,g}(s_K)
    =
    g\brb{Q^{(K)}_{D_K,g}(s_K)}
    =
    g(s_K),
\]
we have
\[
    A^{(K)}_{D_K,g}(s_K)=g(s_K).
\]
Therefore $s_K$ is $(D_K,g)$-recoverable, and $\nu$ is supported on the $(D_K,g)$-recoverable set.

We now compute the learner trajectory.
We claim that, for every $k\in[K]$,
\[
    Q^{(k)}_{D_K,f}(s_K)
    =
    s_k+E_{k-1}.
\]
For $k=1$, this follows from $E_0=0$ and
\[
    Q^{(1)}_{D_K,f}(s_K)=s_1.
\]
Suppose the claim holds up to step $k-1$.
If $w_{k-1}=\mathsf Q$, then
\[
\begin{aligned}
    Q^{(k)}_{D_K,f}(s_K)
    &=
    s_k-\delta s_{k-1}+\delta Q^{(k-1)}_{D_K,f}(s_K)       \\
    &=
    s_k-\delta s_{k-1}+\delta\brb{s_{k-1}+E_{k-2}}        \\
    &=
    s_k+\delta E_{k-2}
    =
    s_k+E_{k-1}.
\end{aligned}
\]
If $w_{k-1}=\mathsf A$, then, since $k-1\le K-1$, we have
\[
    g(s_{k-1})=f(s_{k-1})-1.
\]
Therefore
\[
\begin{aligned}
    Q^{(k)}_{D_K,f}(s_K)
    &=
    s_k-\delta g(s_{k-1})+\delta A^{(k-1)}_{D_K,f}(s_K)     \\
    &=
    s_k-\delta g(s_{k-1})
    +
    \delta f\brb{Q^{(k-1)}_{D_K,f}(s_K)}             \\
    &=
    s_k-\delta\brb{f(s_{k-1})-1}
    +
    \delta f\brb{s_{k-1}+E_{k-2}}                    \\
    &=
    s_k-\delta\brb{\phi s_{k-1}-1}
    +
    \delta\phi\brb{s_{k-1}+E_{k-2}}                  \\
    &=
    s_k+\delta\brb{\phi E_{k-2}+1}
    =
    s_k+E_{k-1}.
\end{aligned}
\]
Thus the learner-trajectory claim holds for all $k$.
In particular,
\[
    Q^{(K)}_{D_K,f}(s_K)
    =
    s_K+E_{K-1}
    =
    s_K+\delta\alpha_K(\phi,\delta).
\]

We can now evaluate the two terms.
First, the oracle-trajectory risk is
\[
\begin{aligned}
    R_{Q^{(K)}_{D_K,g}\#\nu,\ell}(f,g)
    &=
    \ell\Brb{
        f\brb{Q^{(K)}_{D_K,g}(s_K)},
        g\brb{Q^{(K)}_{D_K,g}(s_K)}
    }                                                   \\
    &=
    \ell\brb{f(s_K),g(s_K)}
    =
    \ell\brb{f(s_K),f(s_K)}
    =
    0.
\end{aligned}
\]
Second, the reasoning risk is
\[
\begin{aligned}
    R^{D_K}_{\nu,\ell}(f,g)
    &=
    \ell\Brb{
        A^{(K)}_{D_K,f}(s_K),
        g(s_K)
    }                                                   \\
    &=
    \ell\Brb{
        f\brb{Q^{(K)}_{D_K,f}(s_K)},
        g(s_K)
    }                                                   \\
    &=
    \ell\brb{
        f\brb{s_K+\delta\alpha_K(\phi,\delta)},
        f(s_K)
    }                                                   \\
    &=
    \frac{\lambda}{2}
    \rho\brb{
        f\brb{s_K+\delta\alpha_K(\phi,\delta)},
        f(s_K)
    }                                                   \\
    &=
    \frac{\lambda\phi\delta}{2}
    \alpha_K(\phi,\delta).
\end{aligned}
\]
Since $d(f,g)=1$ and the oracle-trajectory risk is zero, this is exactly
\[
    \frac{\lambda\phi\delta}{2}
    d(f,g)
    \alpha_K(\phi,\delta)
    +
    R_{Q^{(K)}_{D_K,g}\#\nu,\ell}(f,g).
\]
Thus the upper bound of \Cref{cor:stability-reasoning-risk} is attained.
\end{proof}

\section{Additional material for Section \ref{sec:formalizing-cot}}

\subsection{Details for Example \ref{ex:multiply}}
\label[appendix]{app:details-example-multiply}

We explicitly specify the four functions in 
$D_4=\brb{D_4^{(1)},D_4^{(2)},D_4^{(3)},D_4^{(4)}}$
for the illustrative case of multiplying a one-digit integer by a two-digit integer.
Throughout this example, we slightly abuse notation by identifying a nonnegative integer with its decimal-string representation, which is an element of $\cX$.
Let the input be the two-factor multiplication expression
\[
    x = d_1 \cdot n,
    \qquad
    n = 10d_2+d_3,
\]
where $d_1,d_3\in\set{0,\dots,9}$ and $d_2\in\set{1,\dots,9}$.
We define the four steps of the chain rule on such inputs as follows:
\begin{enumerate}
    \item The first step extracts the tens digit of the two-digit factor and produces the elementary multiplication
    \[
        D_4^{(1)}(x)
        \coloneqq
        d_1\cdot d_2.
    \]

    \item The second step multiplies the first partial product by $10$.
    For generic coordinates $(x,q_1,a_1)\in\cX^3$, define
    \[
        D_4^{(2)}(x,q_1,a_1)
        \coloneqq
        10\cdot a_1,
    \]
    whenever $x=d_1\cdot(10d_2+d_3)$ is of the above form and $a_1$ represents a nonnegative integer.

    \item The third step extracts the units digit of the two-digit factor and produces the elementary multiplication
    \[
        D_4^{(3)}(x,q_1,a_1,q_2,a_2)
        \coloneqq
        d_1\cdot d_3,
    \]
    whenever $x=d_1\cdot(10d_2+d_3)$ is of the above form.

    \item The fourth step adds the two partial products.
    For generic coordinates $(x,q_1,a_1,q_2,a_2,q_3,a_3)\in\cX^7$, define
    \[
        D_4^{(4)}(x,q_1,a_1,q_2,a_2,q_3,a_3)
        \coloneqq
        a_2+a_3,
    \]
    whenever $x=d_1\cdot(10d_2+d_3)$ is of the above form and $a_2,a_3$ represent nonnegative integers.
\end{enumerate}
On all remaining inputs, including cases where one of the coordinates required above does not represent a nonnegative integer, define the corresponding step arbitrarily, for example by returning the fixed element $0\in\cX$.
This makes each $D_4^{(k)}$ a well-defined function from $\cX^{2k-1}$ to $\cX$.
The precise behavior of the chain rule outside this illustrative family is irrelevant for the example.

\textbf{Recoverability of the illustrative family.}

For this illustrative chain rule, every expression of the form
$d_1\cdot(10d_2+d_3)$ with $d_1,d_3\in\set{0,\dots,9}$ and $d_2\in\set{1,\dots,9}$ is $(D_4,g)$-recoverable.
Indeed, along the oracle trajectory, we have
\[
    Q^{(1)}_{D_4,g}(x)
    =
    d_1\cdot d_2,
    \qquad
    A^{(1)}_{D_4,g}(x)
    =
    g(d_1\cdot d_2)
    =
    d_1d_2.
\]
The second step then gives
\[
    Q^{(2)}_{D_4,g}(x)
    =
    10\cdot A^{(1)}_{D_4,g}(x)
    =
    10\cdot d_1d_2,
    \qquad
    A^{(2)}_{D_4,g}(x)
    =
    g(10\cdot d_1d_2)
    =
    10d_1d_2.
\]
The third step gives
\[
    Q^{(3)}_{D_4,g}(x)
    =
    d_1\cdot d_3,
    \qquad
    A^{(3)}_{D_4,g}(x)
    =
    g(d_1\cdot d_3)
    =
    d_1d_3.
\]
Finally,
\[
    Q^{(4)}_{D_4,g}(x)
    =
    A^{(2)}_{D_4,g}(x)+A^{(3)}_{D_4,g}(x)
    =
    10d_1d_2+d_1d_3,
\]
and therefore
\[
    A^{(4)}_{D_4,g}(x)
    =
    g\brb{10d_1d_2+d_1d_3}
    =
    d_1(10d_2+d_3)
    =
    g(x).
\]
Thus $x$ is $(D_4,g)$-recoverable.

This example uses $g$ as the ideal ground-truth evaluator only to specify correctness.
It does not mean that a learned hypothesis must be trained on all multiplication expressions.
For instance, the source distribution may be supported only on elementary expressions such as one-digit-by-one-digit multiplications, multiplications by $10$, and additions, while the test distribution may be supported on one-digit-by-two-digit multiplication expressions.
The chain rule then illustrates how CoT can transform a test query into elementary subquestions on which the hypothesis may already be accurate.

\subsection{Recoverability and the support assumption}
    \label[appendix]{sec:large-risks-no-structure}
    In Section \ref{sec:formalizing-cot}, we assumed that the distribution $\nu$ is supported on the $(D_K, g)$-recoverable set, so that we can decompose the reasoning risk into the two terms in Inequality~\eqref{eq:canonical-decomposition}. In the absence of this structural assumption, the reasoning risk $R_{\nu,\ell}^{D_K}(f,g)$ is bounded by the sum of three terms, rather than by the two terms appearing in \eqref{eq:canonical-decomposition}:
        \begin{align*}
            R_{\nu,\ell}^{D_K}(f,g) \leq &\underbrace{
            \underset{X \sim \nu}{\E}
            \bbsb{
                \ell\Brb{
                    f\brb{ Q_{D_K,f}^{(K)}(X) }, \,
                    f\brb{ Q_{D_K,g}^{(K)}(X) }
                }
            }
        }_{\text{trajectory-mismatch risk (TMR)}}
        +
        \underbrace{
            \underset{X \sim \nu}{\E}
            \bbsb{
                \ell\Brb{
                    f \brb{ Q_{D_K,g}^{(K)}(X) }, \, 
                    g \brb{ Q_{D_K,g}^{(K)}(X) }
                }
            }
            }_{\text{oracle-trajectory risk (OTR)}} + \\ & \underbrace{
            \underset{X \sim \nu}{\E}
            \bbsb{
                \ell\Brb{
                    g \brb{ Q_{D_K,g}^{(K)}(X) }, \, 
                    g \brb{ X }
                }
            }
            }_{\text{oracle-mismatch risk (OMR)}}
    \end{align*}
    In this decomposition, an additional term emerges—the oracle-mismatch risk (OMR)—which quantifies the extent of misalignment between the chain rule $D_K$ and the oracle $g$. Notably, this term is unbounded in general and can be arbitrarily large, as established in the following proposition.
    
    \begin{proposition}
        For any integer $K\geq 2$ and any $M>0$, there exist a metric space $(\mc{X}, \rho)$, a loss function $\ell\colon \mc{X}^2\to [0,\infty)$, two functions $f,g\colon \mc{X}\to\mc{X}$, a $K$-step chain rule $D_K$, and a distribution $\nu$ whose support is not contained in the $(D_K,g)$-recoverable set, such that both TMR and OTR achieve $0$, while the OMR is exactly $M$,
        \begin{align*}
            \underset{X \sim \nu}{\E}
            \bbsb{
                \ell\Brb{
                    f\brb{ Q_{D_K,f}^{(K)}(X) }, \,
                    f\brb{ Q_{D_K,g}^{(K)}(X) }
                }
            } &= 0 \\
            \underset{X \sim \nu}{\E}
            \bbsb{
                \ell\Brb{
                    f \brb{ Q_{D_K,g}^{(K)}(X) }, \, 
                    g \brb{ Q_{D_K,g}^{(K)}(X) }
                }
            } &= 0 \\
            \underset{X \sim \nu}{\E}
            \bbsb{
                \ell\Brb{
                    g \brb{ Q_{D_K,g}^{(K)}(X) }, \, 
                    g \brb{ X }
                }
            } & = M
        \end{align*}
    \end{proposition}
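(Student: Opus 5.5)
The plan is an explicit one-point construction in the same style as the lemmas in \Cref{appe:proof-of-no-free-lunch}. We take $\cX\coloneqq\{0,1\}$ with the discrete metric $\rho$ and the loss $\ell(u,v)\coloneqq M\,\rho(u,v)$. This loss vanishes on the diagonal and inherits the triangle inequality from $\rho$, so it is a quasimetric. For the answer maps we choose $f=g$, which makes the TMR and the OTR vanish simultaneously. Finally, we let $\nu\coloneqq\delta_0$ and design a chain rule that sends the oracle somewhere whose answer differs from $g(0)$.

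Concretely, we set $g(z)\coloneqq z$ and $f\coloneqq g$. We define $D_K^{(1)}(x)\coloneqq 1$, and for every $k\in\{2,\dots,K\}$ we let $D_K^{(k)}(x,q_1,a_1,\dots,q_{k-1},a_{k-1})\coloneqq q_{k-1}$. All of these maps are trivially measurable.

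First, we compute the oracle trajectory. It is constant, with $Q^{(k)}_{D_K,g}(0)=1$ and $A^{(k)}_{D_K,g}(0)=g(1)=1$ for every $k$. Since $A^{(K)}_{D_K,g}(0)=1\neq 0=g(0)$, the point $0$ is not $(D_K,g)$-recoverable, so the support $\{0\}$ of $\nu$ is not contained in the recoverable set, as required.

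Second, because $f=g$, the learner trajectory coincides with the oracle trajectory, so $Q^{(K)}_{D_K,f}(0)=Q^{(K)}_{D_K,g}(0)=1$. Hence the TMR equals $\ell(f(1),f(1))=0$, and the OTR equals $\ell(f(1),g(1))=0$.

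Third, the OMR equals $\ell\brb{g(Q^{(K)}_{D_K,g}(0)),g(0)}=\ell(1,0)=M$.

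There is no real obstacle here. The only care needed is to check that $\ell$ is a genuine quasimetric (triangle inequality and zero diagonal) and that the chain rule steps have the correct arities $\cX^{2k-1}\to\cX$; both are immediate. If a continuum space is preferred for consistency with the other constructions, the same argument works verbatim on $\cX=[0,1]$ with $\rho(x,y)=\labs{x-y}$ and $\ell\coloneqq M\rho$, since the values involved are still $0$ and $1$.
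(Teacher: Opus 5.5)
Your construction is correct, and it uses the same mechanism as the paper's proof: the hypothesis agrees with the ground truth on every question the chain rule visits, so TMR and OTR vanish, while the chain rule sends the oracle to a question whose answer is at loss $M$ from $g(X)$. The paper does this on $\R$ with $f(x)=x$, $g(x)=\labs{x}$, $D_K^{(1)}(x)=x+M$ and $\nu$ uniform on $[0,L]$; your two-point version with $f=g$ differs only cosmetically, and it makes more visible that the OMR does not depend on the hypothesis at all.
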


    \begin{proof}
    Let $\cX=\R$. 
    For any $a,b\in\cX$, let the metric be $\rho(a,b)\coloneqq |a-b|$ and the loss be $\ell(a,b)\coloneqq |a-b|$. 
    Define the answer maps $f,g\colon\cX\to\cX$ by
    \[
        f(x)\coloneqq x,
        \qquad
        g(x)\coloneqq |x|
        \qquad
        \text{for every } x\in\cX .
    \]
    Let $\nu$ be the uniform distribution on $[0,L]$, for some $L>0$.

    For the chain rule $D_K=\brb{D_K^{(1)},\ldots,D_K^{(K)}}$, define
    \[
        D_K^{(1)}(x)\coloneqq x+M,
    \]
    and, for every $k\in\{2,\ldots,K\}$,
    \[
        D_K^{(k)}
        \brb{x,x^{(1)},y^{(1)},\ldots,x^{(k-1)},y^{(k-1)}}
        \coloneqq
        y^{(k-1)} .
    \]

    We first compute the trajectories pointwise. 
    Fix any $x\in[0,L]$. 
    Since $x\ge 0$ and $x+M>0$, we have
    \[
        Q_{D_K,f}^{(1)}(x)
        =
        Q_{D_K,g}^{(1)}(x)
        =
        x+M,
    \]
    and therefore
    \[
        A_{D_K,f}^{(1)}(x)
        =
        f(x+M)
        =
        x+M
        =
        g(x+M)
        =
        A_{D_K,g}^{(1)}(x).
    \]
    By induction, for every $k\in\{1,\ldots,K\}$,
    \[
        Q_{D_K,f}^{(k)}(x)
        =
        Q_{D_K,g}^{(k)}(x)
        =
        x+M,
        \qquad
        A_{D_K,f}^{(k)}(x)
        =
        A_{D_K,g}^{(k)}(x)
        =
        x+M .
    \]
    Indeed, once the first answer is $x+M$, every subsequent step of the chain rule returns the previous answer, and both $f$ and $g$ act as the identity on the nonnegative point $x+M$.

    It follows that, for every $x\in[0,L]$,
    \[
        f\brb{Q_{D_K,f}^{(K)}(x)}
        =
        f\brb{Q_{D_K,g}^{(K)}(x)}
        =
        x+M,
    \]
    and hence the TMR integrand is zero. 
    Similarly, since $Q_{D_K,g}^{(K)}(x)=x+M\ge 0$,
    \[
        f\brb{Q_{D_K,g}^{(K)}(x)}
        =
        x+M
        =
        g\brb{Q_{D_K,g}^{(K)}(x)},
    \]
    and hence the OTR integrand is zero.

    On the other hand, for every $x\in[0,L]$,
    \[
        g\brb{Q_{D_K,g}^{(K)}(x)}
        =
        g(x+M)
        =
        x+M,
        \qquad
        g(x)=x,
    \]
    and therefore
    \[
        \ell\Brb{
            g\brb{Q_{D_K,g}^{(K)}(x)},
            g(x)
        }
        =
        \ell(x+M,x)
        =
        M .
    \]
    Thus, if $X\sim\nu$, then
    \[
    \underset{X\sim\nu}{\E}
    \bbsb{
        \ell\Brb{
            f\brb{Q_{D_K,f}^{(K)}(X)},
            f\brb{Q_{D_K,g}^{(K)}(X)}
        }
    }
    =
    0,
    \]
    \[
    \underset{X\sim\nu}{\E}
    \bbsb{
        \ell\Brb{
            f\brb{Q_{D_K,g}^{(K)}(X)},
            g\brb{Q_{D_K,g}^{(K)}(X)}
        }
    }
    =
    0,
    \]
    and
    \[
    \underset{X\sim\nu}{\E}
    \bbsb{
        \ell\Brb{
            g\brb{Q_{D_K,g}^{(K)}(X)},
            g(X)
        }
    }
    =
    M .
    \]

    Finally, the support of $\nu$ is not contained in the $(D_K,g)$-recoverable set. 
    Indeed, for every $x\in[0,L]$,
    \[
        A_{D_K,g}^{(K)}(x)=x+M
        \neq
        x
        =
        g(x),
    \]
    so no point in $[0,L]$ is $(D_K,g)$-recoverable.
\end{proof}
    
    Therefore, to control the oracle-mismatch risk (OMR) and prevent it from becoming arbitrarily large, it is natural to impose structural assumptions on the distribution $\nu$. A natural choice is to assume that $\nu$ is supported on the $(D_K, g)$-recoverable set, under which the otherwise uncontrollable OMR term vanishes.

\section{Additional material for Section~\ref{sec:benefit-cot}}
\label[appendix]{sec:OTR_bounds}

As discussed in Section~\ref{sec:benefit-cot}, the oracle-trajectory risk (OTR) in Inequality~\eqref{eq:canonical-decomposition} can be controlled using tools from domain adaptation.
We spell out one standard way to obtain such a bound.
The goal is to relate the target-domain risk under the oracle-trajectory distribution $Q_{D_K,g}^{(K)}\#\nu$ to the empirical risk under the source distribution $\mu$, plus a discrepancy term and standard complexity terms.

We first recall the notion of empirical Rademacher complexity.
Let $\mathcal{G}$ be a class of real-valued functions on a set $\mc{S}$.
For a sample $S=(S_1,\ldots,S_m)\in\mc{S}^m$, define
\[
    \hat{\mathfrak{R}}_S(\mathcal{G})
    \coloneqq
    2\underset{\sigma_1,\ldots,\sigma_m\sim\mathrm{Uni}(\set{-1,1}^m)}{\E}
    \bbsb{
        \sup_{\gamma\in\mathcal{G}}
        \left|
        \frac{1}{m}\sum_{i=1}^m \sigma_i \gamma(S_i)
        \right|
    } .
\]
If $\mathcal{D}$ is a distribution on $\mc{S}$, the Rademacher complexity of $\mathcal{G}$ at sample size $m$ is
\[
    \mathfrak{R}_m(\mathcal{G})
    \coloneqq
    \underset{S\sim\mathcal{D}^m}{\E}
    \bbsb{
        \hat{\mathfrak{R}}_S(\mathcal{G})
    } .
\]

From this point onward, assume that hypotheses $f$ are drawn from a hypothesis class $\mathcal{H}\subseteq\mathcal{X}^{\mathcal{X}}$.
To quantify the discrepancy between the source and target distributions, we use the $d^\ell_{\mathcal{H}}$-divergence~\citep{mansour_domain_2023}, a loss-dependent notion of distributional distance that generalizes the classical $\mathcal{H}$-divergence beyond the $0$--$1$ loss setting~\citep{ben-david_analysis_2006,ben-david_theory_2010}.

\begin{definition}[$d^\ell_\mathcal{H}$-divergence]
Given a loss function $\ell\colon\mc{X}\times\mc{X}\to[0,\infty)$ and two distributions $\mu$ and $\nu$ on $\mc{X}$, define
\[
    d^\ell_\mathcal{H}(\mu,\nu)
    \coloneqq
    \sup_{h,h'\in\mathcal{H}}
    \left|
    \underset{x\sim\mu}{\E}
    \bbsb{
        \ell\brb{h(x),h'(x)}
    }
    -
    \underset{x\sim\nu}{\E}
    \bbsb{
        \ell\brb{h(x),h'(x)}
    }
    \right| .
\]
\end{definition}

The quantity $d^\ell_\mathcal{H}$ is symmetric and satisfies the triangle inequality.
Its empirical counterpart is defined analogously.
Given independent samples $S=(S_1,\ldots,S_m)\sim\mu^m$ and $T=(T_1,\ldots,T_m)\sim\nu^m$, let $\hat{\mu}_S$ and $\hat{\nu}_T$ denote the corresponding empirical distributions, and set
\[
    \hat{d}^{\ell,m}_\mathcal{H}(\mu,\nu)
    \coloneqq
    d^\ell_\mathcal{H}\brb{\hat{\mu}_S,\hat{\nu}_T}.
\]
Equivalently,
\[
    \hat{d}^{\ell,m}_\mathcal{H}(\mu,\nu)
    =
    \sup_{h,h'\in\mathcal{H}}
    \left|
    \frac{1}{m}\sum_{i=1}^m \ell\brb{h(S_i),h'(S_i)}
    -
    \frac{1}{m}\sum_{i=1}^m \ell\brb{h(T_i),h'(T_i)}
    \right| .
\]

We shall use the following standard Rademacher bounds.
For a distribution $\mu$ on $\mc{X}$ and a sample $S=(X_1,\ldots,X_m)\sim\mu^m$, write
\[
    \hat{R}_{S,\ell}(h,g)
    \coloneqq
    \frac{1}{m}\sum_{i=1}^m \ell\brb{h(X_i),g(X_i)} .
\]

\begin{lemma}
\label{lem:radbound-risks}
Let $\ell\colon\mc{X}\times\mc{X}\to[0,M]$ be a bounded loss function, and define
\[
    \mathcal{F}_{\mathcal{H}}
    \coloneqq
    \set{(x,y)\mapsto \ell\brb{h(x),y}\colon h\in\mathcal{H}} .
\]
Let $S=((X_1,g(X_1)),\ldots,(X_m,g(X_m)))$, where $X_1,\ldots,X_m$ are drawn i.i.d. from $\mu$.
Then, for any $\epsilon>0$, with probability at least $1-\epsilon$ over $S$, the following inequality holds for all $h\in\mathcal{H}$:
\[
    R_{\mu,\ell}(h,g)
    \le
    \hat{R}_{S,\ell}(h,g)
    +
    \hat{\mathfrak{R}}_S(\mathcal{F}_{\mathcal{H}})
    +
    3M\sqrt{\frac{\log\frac{2}{\epsilon}}{2m}} .
\]
\end{lemma}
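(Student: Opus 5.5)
This is a standard uniform-convergence bound, so I plan to run the usual symmetrization and McDiarmid argument on the loss class $\mathcal{F}_{\mathcal{H}}$ evaluated at the labeled points $Z_i=(X_i,g(X_i))$. Write $\Phi(S)\coloneqq\sup_{h\in\mathcal{H}}\brb{R_{\mu,\ell}(h,g)-\hat{R}_{S,\ell}(h,g)}$. Since $\ell$ takes values in $[0,M]$, replacing one sample $Z_i$ by $Z_i'$ changes each empirical average by at most $M/m$. Hence $\Phi$ has bounded differences $M/m$. McDiarmid's inequality then gives, with probability at least $1-\epsilon/2$,
\[
\Phi(S)\le \E[\Phi(S)]+M\sqrt{\tfrac{\log(2/\epsilon)}{2m}}.
\]

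Next I bound $\E[\Phi(S)]$ by symmetrization. Introduce a ghost sample $S'$ drawn from the same law and write $R_{\mu,\ell}(h,g)=\E_{S'}[\hat{R}_{S',\ell}(h,g)]$. Move the supremum inside with Jensen's inequality and insert Rademacher signs $\sigma_i$, using the exchangeability of $Z_i$ and $Z_i'$. This yields
\[
\E[\Phi(S)]\le 2\,\E_{S,\sigma}\Bsb{\sup_{h\in\mathcal{H}}\tfrac1m\textstyle\sum_i\sigma_i\,\ell\brb{h(X_i),g(X_i)}}.
\]
The right-hand side is at most $\mathfrak{R}_m(\mathcal{F}_{\mathcal{H}})$. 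The paper's definition of $\hat{\mathfrak{R}}_S$ already carries the factor $2$ and the absolute value, and the absolute value only enlarges the quantity.

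The remaining step converts the expected complexity into the empirical one. The map $S\mapsto\hat{\mathfrak{R}}_S(\mathcal{F}_{\mathcal{H}})$ also has bounded differences: changing one point moves $\bigl|\tfrac1m\sum\sigma_i\gamma(Z_i)\bigr|$ by at most $M/m$, and with the factor $2$ the increment is $2M/m$. A second application of McDiarmid, at confidence $\epsilon/2$, gives
\[
\mathfrak{R}_m\le\hat{\mathfrak{R}}_S+2M\sqrt{\tfrac{\log(2/\epsilon)}{2m}}.
\]
A union bound over the two events then produces the stated inequality with total constant $3M$, simultaneously for all $h$.

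I expect the main obstacle to be only bookkeeping of constants. One must check that the factor-$2$ convention in $\hat{\mathfrak{R}}_S$ exactly absorbs the symmetrization factor, and that the bounded-difference constant for $\hat{\mathfrak{R}}_S$ is $2M/m$, so the deviation terms add up to $3M$ rather than something larger. Measurability of the suprema is taken for granted, as the paper does elsewhere.
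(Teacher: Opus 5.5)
Your proposal is correct. It is the textbook symmetrization argument with two applications of McDiarmid's inequality, which is what the paper relies on: the paper gives no proof and presents the lemma as a standard Rademacher bound. Your constants are also right: bounded differences $M/m$ for $\Phi$ and $2M/m$ for $S\mapsto\hat{\mathfrak{R}}_S(\mathcal{F}_{\mathcal{H}})$, each at confidence $\epsilon/2$, give deviation terms $M\sqrt{\log(2/\epsilon)/(2m)}$ and $2M\sqrt{\log(2/\epsilon)/(2m)}$, and the paper's factor-$2$, absolute-value convention absorbs the symmetrization factor, so the total is exactly $3M$.
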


\begin{lemma}
\label{lemma:radbound-divergence}
Let $\ell\colon\mc{X}\times\mc{X}\to[0,M]$ be a bounded loss function, and define
\[
    L_{\mathcal{H}}
    \coloneqq
    \set{x\mapsto \ell\brb{h(x),h'(x)}\colon h,h'\in\mathcal{H}} .
\]
Let $\eta$ be a distribution on $\mc{X}$, let $S=(X_1,\ldots,X_m)\sim\eta^m$, and let $\hat{\eta}_S$ be the empirical distribution induced by $S$.
Then, for any $\epsilon>0$, with probability at least $1-\epsilon$ over $S$,
\[
    d^\ell_\mathcal{H}(\eta,\hat{\eta}_S)
    \le
    \hat{\mathfrak{R}}_S(L_\mathcal{H})
    +
    3M\sqrt{\frac{\log\frac{2}{\epsilon}}{2m}} .
\]
\end{lemma}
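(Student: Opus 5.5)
This is a standard Rademacher uniform-convergence bound applied to the class $L_{\mathcal{H}}$ of $[0,M]$-valued functions $x\mapsto \ell(h(x),h'(x))$, $h,h'\in\mathcal{H}$. By definition of $d^\ell_{\mathcal{H}}$ and of $\hat\eta_S$,
\[
    d^\ell_{\mathcal{H}}(\eta,\hat\eta_S)
    =
    \sup_{\gamma\in L_{\mathcal{H}}}
    \left| \underset{x\sim\eta}{\E}[\gamma(x)] - \frac{1}{m}\sum_{i=1}^m \gamma(X_i)\right|
    \eqqcolon \Phi(S).
\]
The plan is therefore to bound the two-sided uniform deviation $\Phi(S)$ in three steps: a first McDiarmid concentration, symmetrization, and a second McDiarmid step that replaces the expected Rademacher complexity by the empirical one.

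\textbf{Step 1 (concentration of $\Phi$).} Since every $\gamma\in L_{\mathcal{H}}$ takes values in $[0,M]$, replacing a single $X_i$ changes $\Phi(S)$ by at most $M/m$. McDiarmid's inequality then gives, with probability at least $1-\epsilon/2$,
\[
    \Phi(S)\le \E[\Phi(S)] + M\sqrt{\log(2/\epsilon)/(2m)}.
\]

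\textbf{Step 2 (symmetrization).} Introduce a ghost sample $S'\sim\eta^m$ and write $\E_\eta\gamma = \E_{S'}\bigl[\frac1m\sum_i\gamma(X_i')\bigr]$. Jensen's inequality moves the supremum and the absolute value outside the $S'$-expectation. Inserting Rademacher signs, which is legitimate because $(X_i,X_i')$ is exchangeable, yields
\[
    \E[\Phi(S)] \le 2\,\E_{S,\sigma}\Bigl[\sup_\gamma \Bigl|\tfrac1m\textstyle\sum_i\sigma_i\gamma(X_i)\Bigr|\Bigr] = \mathfrak{R}_m(L_{\mathcal{H}}).
\]
The factor $2$ here matches the paper's normalization of $\hat{\mathfrak{R}}_S$, and the absolute value inside that definition is exactly what allows the two-sided deviation to be handled in one step.

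\textbf{Step 3 (empirical Rademacher complexity).} The quantity $\hat{\mathfrak{R}}_S(L_{\mathcal{H}})$ also has bounded differences: changing one $X_i$ alters it by at most $2M/m$. A second application of McDiarmid's inequality gives, with probability at least $1-\epsilon/2$,
\[
    \mathfrak{R}_m(L_{\mathcal{H}}) \le \hat{\mathfrak{R}}_S(L_{\mathcal{H}}) + 2M\sqrt{\log(2/\epsilon)/(2m)}.
\]
A union bound over the two events, each of probability at least $1-\epsilon/2$, combines Steps 1–3 into the claimed bound with total constant $M+2M=3M$.

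The main obstacle is the bookkeeping of constants. The paper's definition of $\hat{\mathfrak{R}}_S$ includes the factor $2$ and the absolute value, and the bounded-difference constant in Step 3 must come out as $2M/m$ so that the final coefficient is exactly $3M$. Measurability of the suprema is assumed implicitly, in line with the paper's standing measurability conventions.
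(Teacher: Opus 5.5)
Your proof is correct: the bounded-difference constants $M/m$ for the deviation and $2M/m$ for the empirical Rademacher complexity, the two-sided symmetrization under the paper's factor-$2$, absolute-value normalization, and the union bound over two events of failure probability $\epsilon/2$ each combine to give exactly $3M\sqrt{\log(2/\epsilon)/(2m)}$. The paper gives no proof of this lemma and simply invokes it as a standard Rademacher bound from the domain-adaptation literature; your McDiarmid--symmetrization--McDiarmid argument is the standard proof behind that bound.
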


Combining Lemma~\ref{lemma:radbound-divergence} with the symmetry and triangle inequality of $d^\ell_\mathcal{H}$ gives the following empirical-discrepancy bound.

\begin{lemma}
\label{lem:radbound-div}
Let $\ell\colon\mc{X}\times\mc{X}\to[0,M]$ be a bounded loss function.
Let $\mu$ and $\nu$ be distributions on $\mc{X}$, and let $S\sim\mu^m$ and $T\sim\nu^m$ be independent samples.
Then, for any $\epsilon>0$, with probability at least $1-\epsilon$ over $S$ and $T$,
\[
    d^\ell_\mathcal{H}(\mu,\nu)
    \le
    \hat{d}^{\ell,m}_\mathcal{H}(\mu,\nu)
    +
    \hat{\mathfrak{R}}_S(L_\mathcal{H})
    +
    \hat{\mathfrak{R}}_T(L_\mathcal{H})
    +
    6M\sqrt{\frac{\log\frac{4}{\epsilon}}{2m}} .
\]
\end{lemma}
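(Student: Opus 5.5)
The statement to prove is \Cref{lem:radbound-div}. The plan is to reduce it to \Cref{lemma:radbound-divergence} applied twice, once on each sample, and to glue the pieces with the triangle inequality for $d^\ell_\mathcal{H}$ plus a union bound. Since $d^\ell_\mathcal{H}$ is a supremum of absolute differences of expectations, it is symmetric and satisfies the triangle inequality. Symmetry is immediate from the absolute value. For the triangle inequality, for each fixed pair $(h,h')$ the inner absolute difference is subadditive across an intermediate distribution, and taking the supremum preserves this. We may therefore write
\[
    d^\ell_\mathcal{H}(\mu,\nu)
    \le
    d^\ell_\mathcal{H}(\mu,\hat\mu_S)
    +
    d^\ell_\mathcal{H}(\hat\mu_S,\hat\nu_T)
    +
    d^\ell_\mathcal{H}(\hat\nu_T,\nu),
\]
where the middle term is, by definition, exactly $\hat{d}^{\ell,m}_\mathcal{H}(\mu,\nu)$.

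Next, we apply \Cref{lemma:radbound-divergence} with $\eta=\mu$, sample $S$, and confidence parameter $\epsilon/2$. This gives, with probability at least $1-\epsilon/2$,
\[
    d^\ell_\mathcal{H}(\mu,\hat\mu_S)\le \hat{\mathfrak{R}}_S(L_\mathcal{H})+3M\sqrt{\log(4/\epsilon)/(2m)}.
\]
In the same way, applying it with $\eta=\nu$ and sample $T$ gives, with probability at least $1-\epsilon/2$,
\[
    d^\ell_\mathcal{H}(\nu,\hat\nu_T)\le \hat{\mathfrak{R}}_T(L_\mathcal{H})+3M\sqrt{\log(4/\epsilon)/(2m)}.
\]
Symmetry lets us rewrite this as a bound on $d^\ell_\mathcal{H}(\hat\nu_T,\nu)$.

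By the union bound, both events hold simultaneously with probability at least $1-\epsilon$. We do not even need independence of $S$ and $T$ here, since each event concerns only one sample. Summing the two bounds inside the triangle inequality produces the stated bound: the two deviation terms $3M\sqrt{\log(4/\epsilon)/(2m)}$ combine into $6M\sqrt{\log(4/\epsilon)/(2m)}$.

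There is no real obstacle. The only points requiring care are the following.
\begin{itemize}
    \item Substituting $\epsilon/2$ into $\log(2/\epsilon)$ is what turns it into $\log(4/\epsilon)$.
    \item The triangle inequality for $d^\ell_\mathcal{H}$ should be justified for general, and in particular empirical, distributions. This holds because each expectation is finite, given that $\ell$ is bounded by $M$.
\end{itemize}
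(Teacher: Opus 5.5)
Your proposal is correct and follows the paper's proof exactly. Both arguments apply the triangle inequality and symmetry of $d^\ell_\mathcal{H}$ through $\hat\mu_S$ and $\hat\nu_T$, identify the middle term as $\hat{d}^{\ell,m}_\mathcal{H}(\mu,\nu)$, and then apply \Cref{lemma:radbound-divergence} to each sample with failure probability $\epsilon/2$, combined with a union bound (and your remark that independence of $S$ and $T$ is not needed for the union bound is also correct).
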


\begin{proof}
By the triangle inequality and symmetry of $d^\ell_\mathcal{H}$,
\[
    d^\ell_\mathcal{H}(\mu,\nu)
    \le
    d^\ell_\mathcal{H}(\mu,\hat{\mu}_S)
    +
    d^\ell_\mathcal{H}(\hat{\mu}_S,\hat{\nu}_T)
    +
    d^\ell_\mathcal{H}(\hat{\nu}_T,\nu).
\]
The middle term is $\hat{d}^{\ell,m}_\mathcal{H}(\mu,\nu)$.
Applying Lemma~\ref{lemma:radbound-divergence} to the first and third terms with failure probability $\epsilon/2$ each gives the claim.
\end{proof}

We now derive a Rademacher-based bound for the OTR.
Let
\[
    \tau
    \coloneqq
    Q_{D_K,g}^{(K)}\#\nu
\]
denote the oracle-trajectory target distribution.
For a quasimetric loss, define the approximation term
\[
    \beta_{\mu,\tau}
    \coloneqq
    \inf_{h\in\mathcal{H}}
    \Brb{
        R_{\mu,\ell}(g,h)
        +
        R_{\tau,\ell}(h,g)
    } .
\]
If $\ell$ is symmetric, this reduces to the more familiar expression
$\inf_{h\in\mathcal{H}}\Brb{R_{\mu,\ell}(h,g)+R_{\tau,\ell}(h,g)}$.

\begin{theorem}
\label{thm:otr-rademacher-bound}
Let $K\ge 1$, and let $\ell\colon\mc{X}\times\mc{X}\to[0,M]$ be a bounded quasimetric loss.
Let $\tau\coloneqq Q_{D_K,g}^{(K)}\#\nu$.
Let $S=((X_1,g(X_1)),\ldots,(X_m,g(X_m)))$ with $X_i\sim\mu$ i.i.d., let $U=(U_1,\ldots,U_m)\sim\mu^m$, and let $T=(T_1,\ldots,T_m)\sim\tau^m$, with all samples independent.
Then, for any $\epsilon>0$, with probability at least $1-\epsilon$ over $S,U,T$, the following bound holds for every $f\in\mathcal{H}$:
\[
    R_{\tau,\ell}(f,g)
    \le
    \hat{R}_{S,\ell}(f,g)
    +
    \hat{d}^{\ell,m}_\mathcal{H}(\mu,\tau)
    +
    \hat{\mathfrak{R}}_S(\mathcal{F}_{\mathcal{H}})
    +
    \hat{\mathfrak{R}}_U(L_\mathcal{H})
    +
    \hat{\mathfrak{R}}_T(L_\mathcal{H})
    +
    9M\sqrt{\frac{\log\frac{6}{\epsilon}}{2m}}
    +
    \beta_{\mu,\tau}.
\]
Equivalently,
\[
    R_{Q_{D_K,g}^{(K)}\#\nu,\ell}(f,g)
    \le
    \hat{R}_{S,\ell}(f,g)
    +
    \hat{d}^{\ell,m}_\mathcal{H}\brb{\mu,Q_{D_K,g}^{(K)}\#\nu}
    +
    \hat{\mathfrak{R}}_S(\mathcal{F}_{\mathcal{H}})
    +
    \hat{\mathfrak{R}}_U(L_\mathcal{H})
    +
    \hat{\mathfrak{R}}_T(L_\mathcal{H})
    +
    9M\sqrt{\frac{\log\frac{6}{\epsilon}}{2m}}
    +
    \beta_{\mu,Q_{D_K,g}^{(K)}\#\nu}.
\]
\end{theorem}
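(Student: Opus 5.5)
The plan is the standard domain-adaptation chain: move from the target risk to the source risk through a reference hypothesis $h\in\mathcal{H}$, then replace every population quantity by its empirical counterpart using \Cref{lem:radbound-risks} and \Cref{lem:radbound-div}, with a union bound over the three failure events.

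\textbf{Step 1: a deterministic inequality.} Fix $f,h\in\mathcal{H}$. Two applications of the triangle inequality for the quasimetric $\ell$, inside the expectation, give
\[
    R_{\tau,\ell}(f,g)\le \underset{Y\sim\tau}{\E}\bigl[\ell(f(Y),h(Y))\bigr]+R_{\tau,\ell}(h,g),
\]
and
\[
    \underset{X\sim\mu}{\E}\bigl[\ell(f(X),h(X))\bigr]\le R_{\mu,\ell}(f,g)+R_{\mu,\ell}(g,h).
\]
The order of arguments matters here, and it is exactly why $\beta_{\mu,\tau}$ uses $R_{\mu,\ell}(g,h)$. Bounding the $\tau$-expectation of $\ell(f,h)$ by the $\mu$-expectation plus $d^\ell_{\mathcal{H}}(\mu,\tau)$ (definition of the divergence, since $f,h\in\mathcal{H}$) yields
\[
    R_{\tau,\ell}(f,g)\le R_{\mu,\ell}(f,g)+d^\ell_{\mathcal{H}}(\mu,\tau)+R_{\mu,\ell}(g,h)+R_{\tau,\ell}(h,g).
\]
Taking the infimum over $h$ replaces the last two terms by $\beta_{\mu,\tau}$. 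No sample is involved yet, and this holds for every $f\in\mathcal{H}$.

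\textbf{Step 2: empirical replacement.}
\begin{itemize}
\item Apply \Cref{lem:radbound-risks} to $S$ with failure probability $\epsilon/3$. Uniformly over $f\in\mathcal{H}$, this gives $R_{\mu,\ell}(f,g)\le\hat R_{S,\ell}(f,g)+\hat{\mathfrak{R}}_S(\mathcal{F}_{\mathcal{H}})+3M\sqrt{\log(6/\epsilon)/(2m)}$.
\item Apply \Cref{lem:radbound-div} to the independent samples $U\sim\mu^m$ and $T\sim\tau^m$, with failure probability $2\epsilon/3$ split as $\epsilon/3$ per side in its proof. This gives $d^\ell_{\mathcal{H}}(\mu,\tau)\le\hat d^{\ell,m}_{\mathcal{H}}(\mu,\tau)+\hat{\mathfrak{R}}_U(L_{\mathcal{H}})+\hat{\mathfrak{R}}_T(L_{\mathcal{H}})+6M\sqrt{\log(6/\epsilon)/(2m)}$.
\end{itemize}
Here $\hat d$ is computed from $U,T$, which is why the unlabeled source sample $U$ is separate from $S$. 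A union bound gives overall probability at least $1-\epsilon$, and the confidence terms add to $9M\sqrt{\log(6/\epsilon)/(2m)}$. The second display of the theorem is just notation, since $\tau=Q^{(K)}_{D_K,g}\#\nu$.

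\textbf{Main obstacle.} The only delicate part is the constant bookkeeping. \Cref{lem:radbound-div} as stated carries $\log(4/\epsilon)$ at level $\epsilon$. To obtain the uniform $\log(6/\epsilon)$, I would re-run its proof, invoking \Cref{lemma:radbound-divergence} at level $\epsilon/3$ for each of $U$ and $T$, rather than citing it as a black box.

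One measurability point also needs a brief remark: $T$ is an i.i.d. sample from the pushforward $\tau$. Equivalently, draw $X_i\sim\nu$ and set $T_i=Q^{(K)}_{D_K,g}(X_i)$. This is legitimate because $Q^{(K)}_{D_K,g}$ is measurable as a composition of measurable steps.
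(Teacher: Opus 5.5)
Your proof is correct and follows the paper's argument essentially step for step. It uses the same deterministic chain $R_{\tau,\ell}(f,g)\le R_{\mu,\ell}(f,g)+d^\ell_{\mathcal{H}}(\mu,\tau)+\beta_{\mu,\tau}$, with the same care about argument order that produces $R_{\mu,\ell}(g,h)$, followed by a three-way union bound at level $\epsilon/3$ each. The only cosmetic difference is that you route through \Cref{lem:radbound-div}, whereas the paper applies \Cref{lemma:radbound-divergence} directly to $U$ and $T$. Re-running that lemma's proof is unnecessary: citing \Cref{lem:radbound-div} as a black box at level $2\epsilon/3$ already gives $\log\bigl(4/(2\epsilon/3)\bigr)=\log(6/\epsilon)$.
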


\begin{proof}
Fix any $h\in\mathcal{H}$.
Using the triangle inequality for $\ell$, we have
\[
    R_{\tau,\ell}(f,g)
    \le
    \underset{x\sim\tau}{\E}\bbsb{\ell\brb{f(x),h(x)}}
    +
    R_{\tau,\ell}(h,g).
\]
By the definition of $d^\ell_\mathcal{H}$,
\[
    \underset{x\sim\tau}{\E}\bbsb{\ell\brb{f(x),h(x)}}
    \le
    \underset{x\sim\mu}{\E}\bbsb{\ell\brb{f(x),h(x)}}
    +
    d^\ell_\mathcal{H}(\mu,\tau).
\]
Applying the triangle inequality again gives
\[
    \underset{x\sim\mu}{\E}\bbsb{\ell\brb{f(x),h(x)}}
    \le
    R_{\mu,\ell}(f,g)
    +
    R_{\mu,\ell}(g,h).
\]
Combining the previous three displays and then taking the infimum over $h\in\mathcal{H}$ yields
\[
    R_{\tau,\ell}(f,g)
    \le
    R_{\mu,\ell}(f,g)
    +
    d^\ell_\mathcal{H}(\mu,\tau)
    +
    \beta_{\mu,\tau}.
\]
It remains to replace the population source risk and the population discrepancy by empirical quantities.
Apply Lemma~\ref{lem:radbound-risks} to $S$, and apply Lemma~\ref{lemma:radbound-divergence} separately to $U\sim\mu^m$ and $T\sim\tau^m$, each with failure probability $\epsilon/3$.
A union bound gives probability at least $1-\epsilon$, and the three deviation terms sum to
\[
    9M\sqrt{\frac{\log\frac{6}{\epsilon}}{2m}} .
\]
This proves the claim.
\end{proof}

This bound relates the OTR to the empirical source risk, a discrepancy between the source distribution $\mu$ and the oracle-trajectory target distribution $Q_{D_K,g}^{(K)}\#\nu$, and standard Rademacher complexity terms.
Thus, it makes explicit how statistical complexity and distribution shift jointly contribute to the benefit term in the CoT risk decomposition.

    \newpage

        \end{document}